%%%%%%%% ICML 2024 EXAMPLE LATEX SUBMISSION FILE %%%%%%%%%%%%%%%%%

\documentclass{article}

% Recommended, but optional, packages for figures and better typesetting:
\usepackage{microtype}
\usepackage{graphicx}
\usepackage{subfigure}
\usepackage{booktabs}
\usepackage{multirow}

% hyperref makes hyperlinks in the resulting PDF.
% If your build breaks (sometimes temporarily if a hyperlink spans a page)
% please comment out the following usepackage line and replace
% \usepackage{icml2024} with \usepackage[nohyperref]{icml2024} above.
\usepackage{hyperref}
\usepackage{xspace} % Include the xspace package
\usepackage{tikz}
\usetikzlibrary{bayesnet} % diagrams
\usetikzlibrary{arrows}

% Attempt to make hyperref and algorithmic work together better:

\usepackage{algorithm2e}

% Use the following line for the initial blind version submitted for review:
% \usepackage{icml2024}

% If accepted, instead use the following line for the camera-ready submission:
\usepackage[accepted]{icml2024}

% For theorems and such
\usepackage{amsmath}
\usepackage{amssymb}
\usepackage{mathtools}
\usepackage{amsthm}

% if you use cleveref..
\usepackage[capitalize,noabbrev]{cleveref}

%%%%%%%%%%%%%%%%%%%%%%%%%%%%%%%%
% THEOREMS
%%%%%%%%%%%%%%%%%%%%%%%%%%%%%%%%
\theoremstyle{plain}
\newtheorem{theorem}{Theorem}[section]
\newtheorem{proposition}[theorem]{Proposition}

\theoremstyle{definition}
\newtheorem{definition}[theorem]{Definition}

\theoremstyle{remark}

% Todonotes is useful during development; simply uncomment the next line
%    and comment out the line below the next line to turn off comments
%\usepackage[disable,textsize=tiny]{todonotes}
\usepackage[textsize=tiny]{todonotes}

\newcommand{\ours}[0]{\texttt{EQuAD}\xspace}
\newcommand{\ourst}[0]{\text{EQuAD}\xspace}

\newcommand{\oursfull}[0]{\textbf{E}ncoding-\textbf{QuA}ntifying-\textbf{D}ecorrelation\xspace}
\newcommand{\ourtitle}[0]{\text{Empowering Graph Invariance Learning with Deep Spurious Infomax}\xspace}

%%%%% NEW MATH DEFINITIONS %%%%%

\usepackage{amsmath,amsfonts,bm}

% customized commands

\newcommand{\ind}{\perp\!\!\!\!\perp}

% Mark sections of captions for referring to divisions of figures

% Highlight a newly defined term

% Figure reference, lower-case.

% Figure reference, capital. For start of sentence

% Section reference, lower-case.

% Section reference, capital.

% Reference to two sections.

% Reference to three sections.

% Reference to an equation, lower-case.
\def\eqref#1{equation~\ref{#1}}
% Reference to an equation, upper case

% A raw reference to an equation---avoid using if possible

% Reference to a chapter, lower-case.

% Reference to an equation, upper case.

% Reference to a range of chapters

% Reference to an algorithm, lower-case.

% Reference to an algorithm, upper case.

% Reference to a part, lower case

% Reference to a part, upper case

\def\1{\bm{1}}

% Random variables

% rm is already a command, just don't name any random variables m

% Random vectors

% Elements of random vectors

% Random matrices

% Elements of random matrices

% Vectors

\def\vh{{\bm{h}}}

% Elements of vectors

% Matrix

% Tensor
\DeclareMathAlphabet{\mathsfit}{\encodingdefault}{\sfdefault}{m}{sl}
\SetMathAlphabet{\mathsfit}{bold}{\encodingdefault}{\sfdefault}{bx}{n}

% Graph

\def\gG{{\mathcal{G}}}

% Sets

% Don't use a set called E, because this would be the same as our symbol
% for expectation.

% Entries of a matrix

% entries of a tensor
% Same font as tensor, without \bm wrapper

% The true underlying data generating distribution

% The empirical distribution defined by the training set

% The model distribution

% Stochastic autoencoder distributions

 % Laplace distribution

% Wolfram Mathworld says $L^2$ is for function spaces and $\ell^2$ is for vectors
% But then they seem to use $L^2$ for vectors throughout the site, and so does
% wikipedia.

 % See usage in notation.tex. Chosen to match Daphne's book.

\DeclareMathOperator*{\argmax}{arg\,max}

% The \icmltitle you define below is probably too long as a header.
% Therefore, a short form for the running title is supplied here:
\icmltitlerunning{Empowering Graph Invariance Learning with Deep Spurious Infomax}

\begin{document}

\twocolumn[
\icmltitle{\ourtitle}

% It is OKAY to include author information, even for blind
% submissions: the style file will automatically remove it for you
% unless you've provided the [accepted] option to the icml2024
% package.

% List of affiliations: The first argument should be a (short)
% identifier you will use later to specify author affiliations
% Academic affiliations should list Department, University, City, Region, Country
% Industry affiliations should list Company, City, Region, Country

% You can specify symbols, otherwise they are numbered in order.
% Ideally, you should not use this facility. Affiliations will be numbered
% in order of appearance and this is the preferred way.

\icmlsetsymbol{equal}{*}

\begin{icmlauthorlist}
\icmlauthor{Tianjun Yao}{equal,mbzu}
\icmlauthor{Yongqiang Chen}{equal,mbzu,cuhk}
\icmlauthor{Zhenhao Chen}{mbzu}
\icmlauthor{Kai Hu}{cmu}
\icmlauthor{Zhiqiang Shen}{mbzu}
\icmlauthor{Kun Zhang}{mbzu,cmu}
%\icmlauthor{}{sch}
%\icmlauthor{}{sch}
\end{icmlauthorlist}

\icmlaffiliation{mbzu}{Mohamed bin Zayed University of Artificial Intelligence, Abu Dhabi, UAE}
\icmlaffiliation{cuhk}{The Chinese University of Hong Kong, Hong Kong, China}
\icmlaffiliation{cmu}{Carnegie Mellon University, Pittsburgh, USA}

\icmlcorrespondingauthor{Tianjun Yao}{tianjun.yao@mbzuai.ac.ae}
\icmlcorrespondingauthor{Zhiqiang Shen}{zhiqiang.shen@mbzuai.ac.ae}
\icmlcorrespondingauthor{Kun Zhang}{kun.zhang@mbzuai.ac.ae}

% You may provide any keywords that you
% find helpful for describing your paper; these are used to populate
% the "keywords" metadata in the PDF but will not be shown in the document
\icmlkeywords{invariance learning, out-of-distribution generalization, graph representation learning}

\vskip 0.3in
]

% this must go after the closing bracket ] following \twocolumn[ ...

% This command actually creates the footnote in the first column
% listing the affiliations and the copyright notice.
% The command takes one argument, which is text to display at the start of the footnote.
% The \icmlEqualContribution command is standard text for equal contribution.
% Remove it (just {}) if you do not need this facility.

%\printAffiliationsAndNotice{}  % leave blank if no need to mention equal contribution
\printAffiliationsAndNotice{\icmlEqualContribution} % otherwise use the standard text.

\begin{abstract}
Recently, there has been a surge of interest in developing graph neural networks that utilize the invariance principle on graphs to generalize the out-of-distribution (OOD) data. 
Due to the limited knowledge about OOD data, existing approaches often pose assumptions about the correlation strengths of the underlying spurious features and the target labels. However, this prior is often unavailable and will change arbitrarily in the real-world scenarios, which may lead to severe failures of the existing graph invariance learning methods.
To bridge this gap, we introduce a novel graph invariance learning paradigm, which induces a robust and general inductive bias. 
The paradigm is built upon the observation that the infomax principle encourages learning spurious features regardless of spurious correlation strengths. We further propose the \ours framework that realizes this learning paradigm and employs tailored learning objectives that provably elicit invariant features by disentangling them from the spurious features learned through infomax.
Notably, \ours shows stable and enhanced performance across different degrees of bias in synthetic datasets and challenging real-world datasets up to $31.76\%$. Our code is available at \url{https://github.com/tianyao-aka/EQuAD}.
% \footnote{Our code is available at \url{https://github.com/tianyao-aka/EQuAD}.} 
% Additionally, \ours establishes new state-of-the-art benchmarks on multiple real-world datasets, demonstrating the effectiveness and robustness of our proposed framework. Codes are available at: \url{https://github.com/tianyao-aka/EQuAD}.

% \szq{what strong assumptions here on data, is it possible using a few key words to make it explicit}
% has mentioned in the subsequent part in the abstract: presuming correlation strengths between spurious features and class labels
%\szq{what kind of potential failures? low accuracy? model crash?}
% fail of OOD generalization. this has been discussed in the intro
\vspace{-0.1in}
\end{abstract}

\section{Introduction}
\label{intro}

Despite the enormous success of Graph Neural Networks (GNNs)~\citep{kipf2016semi,xu2018powerful,veličković2018graph}, they generally assume that the testing and training graphs are independently sampled from an identical distribution, i.e., the I.I.D. assumption, which can not be guaranteed in many real-world applications \citep{hu2021open,koh2021wilds,huang2021therapeutics}. 
Therefore, it has drawn great attention from the community to overcome the Out-of-Distribution (OOD) generalization challenge that enables GNNs to generalize to new environments outside the training distributions. Recent studies incorporate the invariance principle from causality~\cite{peters2015causal} into GNNs~\citep{wu2022handling,chen2022learning}, in which the rationale is to learn the invariant graph features or identify invariant subgraphs that only focus on the direct causes of the target label and discards the other features whose correlations with the target labels may change across different environments (e.g., graph sizes). 
Due to the non-Euclidean and abstraction nature of graph data, the environment labels for distinguishing distribution shifts are usually expensive or unavailable, as collecting these labels typically requires expert knowledge~\citep{wu2022handling,chen2022learning}. Therefore, existing approaches generally rely on intermediate partitions of invariant and spurious graph features either from input space~\cite{fan2022debiasing,chen2022learning,wu2022discovering} or latent space~\cite{li2022learning,Liu_2022,zhuang2023learning} and adopts additional assumptions in order to learn the desired graph invariant features.

A significant challenge arises in the application of these graph invariance learning algorithms, as the assumptions typically pose a strong prior about the joint distribution of spurious features $S$ and class label $Y$, i.e., $\mathbb{P}(S, Y)$, in an implicit or explicit way. In real-world scenarios, however, $\mathbb{P}(S, Y)$ can vary arbitrarily, leading to varying correlation degrees between $S$ and $Y$. Consequently, this variability may conflict with the assumptions underlying these algorithms, resulting in potential failures.  
For example, DisC~\cite{fan2022debiasing} presumes a strong correlation between $S$ and $Y$ and uses the presumption to identify a biased graph. Then DisC contrasts against the separated biased subgraph to learn the unbiased graph features.
On the other hand, environment inference~\cite{yang2022learning,li2022learning} and augmentation~\cite{wu2022discovering,Liu_2022} algorithms typically presume a weaker correlation between $S$ and $Y$ to accurately infer or augment the environments. However, the premise can be easily broken when spurious correlation strengths shift, and lead to the failure of graph invariance learning~\citep{chen2023does}. The brittleness of relying on presumed correlations as inductive bias, raises a challenging research question:

\begin{quotation}
    \textit{Is there a reliable inductive bias that remains robust against varying degrees of correlation between $S$ and $Y$, and enhances graph invariance learning?}
\end{quotation}

\textbf{Present work.} To address these challenges, we turn to self-supervised learning (SSL), which alleviates the dependence on $Y$, and eliminates the need for any assumption regarding $\mathbb{P}(S,Y)$. Notably, we show that employing global-local mutual information (MI) maximization, or the \textit{infomax principle}~\cite{hjelm2019learning,veličković2018deep,36,Bell1995AnIA} as the self-supervision objective, enables the model $f(\cdot)$ to capture spurious features with provable guarantees. Building on this insight, we have developed a new paradigm that decouples the learning of invariant features $C$ and spurious features $S$. Specifically, we first learn representations containing $S$ predominantly, then use these representations to uncover $C$.
Furthermore, we propose a flexible framework \oursfull (\ours) in order to realize the novel learning paradigm, where off-the-shelf algorithms can serve as plug-ins for specific implementations for each step. \ours consists of the following three key steps:
1) \textit{Encoding.} We first utilize infomax-based SSL to obtain graph representations fully encoding the spurious features $S$. 
2) \textit{Quantifying.} We then quantify the data samples into a low-dimensional latent space, which accurately captures the correlation degree between $S$ and $Y$ for each sample. 
3) \textit{Decorrealtion.} Finally, we retrain a GNN model from scratch to learn graph invariant representation by leveraging the spurious features obtained from previous steps. Our contributions can be summarized as follows: 

\begin{itemize}

\item We reveal that self-supervised learning, when grounded in the infomax principle, can reliably isolate spurious features under certain mild conditions. In light of the finding, we introduce a new learning paradigm for graph invariance learning, which induces a robust inductive bias that relieves the reliance on presuming spurious correlation strengths between $S$ and $Y$. (Sec.~\ref{sec:self_sup_infomax})
% We observe that self-supervised learning based on the infomax principle can provably extract spurious features exclusively under certain mild conditions. Drawing on the insights,  We propose to decouple the learning of invariant features and spurious features, which alleviates the dependence on class label $Y$, and relieves the reliance on presuming spurious correlation strengths between $S$ and $Y$. 

\item We propose a flexible learning framework called \ours to realize the learning paradigm, as well as a new learning objective that is tailored for \ours, which provably learns the invariant graph representations. (Sec.~\ref{sec:method})

\item We conduct extensive experiments on 7 synthetic datasets and 8 real-world datasets with various types of distribution shifts. The results demonstrate the superiority of our method compared to state-of-the-art approaches. Notably, our method exhibits stable and enhanced performance across different degrees of bias in the synthetic datasets, and outperforms the baseline methods by an average of $31.76\%$. (Sec.~\ref{sec:exp}). 
\end{itemize}

\section{Preliminaries}
\subsection{Notations and Problem Definition}

\textbf{Notations.} Throughout this work, we use $C$ and $S$ to denote contents (invariant factors) and styles (spurious factors) respectively, which are interchangeably with $\vh_c$ (invariant representations) and $\vh_s$ (spurious representations) in this work. $\widehat{C} $ and $\widehat{S}$ denote the estimated invariant and spurious factors, similarly for $\widehat{\vh}_c$ and $\widehat{\vh}_s$. We use $[K]:=\big\{1,2,\cdots,K\big\}$ to denote a index set, $f(\cdot)$ to denote a function, $w$ to denote a scalar value, $\mathbf{w}$ and $\mathbf{W}$ to denote a vector and matrix, respectively. A more complete set of notations are presented in Appendix \ref{app:notations}.
% Here, \(G\) and \(Y\) denote the random variables of graph and label, respectively, and \(G^e\) and \(Y^e\) specify these variables from environment \(e\). The environment label for graphs is typically unobserved due to the high cost of collecting such data in most real-world scenarios.

\textbf{Problem Definition.} We focus on OOD generalization in graph classification. Given a set of graph datasets $\mathcal{G}=\{\mathcal{G}^{e}\}_{e \in \mathcal{E}_{\mathrm{tr}} \subseteq \mathcal{E}_{\mathrm{all}}}$, a GNN model $f$, denoted as \(\rho \circ h\), comprises an encoder \(h\!\!: \mathbb{G} \rightarrow \mathbb{R}^F\) that learns a representation \(\vh_G\) for each graph \(G\), followed by a downstream classifier \(\rho\!: \mathbb{R}^F \rightarrow \mathbb{Y}\) to predict the label \(\widehat{Y}_G=\rho(\vh_G)\). The objective of OOD generalization on graphs is to learn an optimal GNN model $f^*(\cdot): \mathbb{G} \rightarrow \mathbb{Y}$ with data from training environments \(\mathcal{G}_{\text{tr}}=\{\mathcal{G}^e\}_{e \in \mathcal{E}_{\text{tr}}}\) that effectively generalizes across all (unseen) environments:
\begin{equation}\label{eq:inv_learn}
f^*(\cdot)=\arg \min _f \sup _{e \in \mathcal{E}_{all}} \mathcal{R}(f \mid e),
\end{equation}
where $\mathcal{R}(f \mid e)=\mathbb{E}_{\mathrm{G}, \mathrm{Y}}^e[ l(f(\mathrm{G}), \mathrm{Y})]$ is the risk of the predictor $f$ on the environment $e$, and $l(\cdot, \cdot)$ : $\mathbb{Y} \times \mathbb{Y} \rightarrow \mathbb{R}_{+}$ denotes a loss function.

\begin{figure}[t]
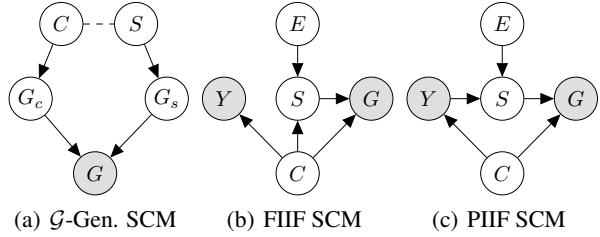

    % \vspace{-0.15in}
    \centering
    \subfigure[$\gG$-Gen. SCM]{\label{fig:graph_gen}
        \resizebox{!}{0.15\textwidth}{\tikz{
                % nodes
                \node[latent] (S) {$S$};%
                \node[latent,left=of S,xshift=0.5cm] (C) {$C$};%
                \node[latent,below=of C,xshift=-0.5cm,yshift=0.5cm] (GC) {$G_c$}; %
                \node[latent,below=of S,xshift=0.5cm,yshift=0.5cm] (GS) {$G_s$}; %
                \node[obs,below=of GC,xshift=1.05cm,yshift=0.5cm] (G) {$G$}; %
                \edge[dashed,-] {C} {S}
                \edge {C} {GC}
                \edge {S} {GS}
                \edge {GC,GS} {G}
            }}}
    \subfigure[FIIF SCM]{\label{fig:scm_fiif}
        \resizebox{!}{0.15\textwidth}{\tikz{
                % nodes
                \node[latent] (E) {$E$};%
                \node[latent,below=of E,yshift=0.5cm] (S) {$S$}; %
                \node[obs,below=of E,xshift=-1.2cm,yshift=0.5cm] (Y) {$Y$}; %
                \node[obs,below=of E,xshift=1.2cm,yshift=0.5cm] (G) {$G$}; %
                \node[latent,below=of Y,xshift=1.2cm,yshift=0.5cm] (C) {$C$}; %
                \edge {E} {S}
                \edge {C} {Y,G}
                \edge {S} {G}
                \edge {C} {S}
            }}}
    \subfigure[PIIF SCM]{\label{fig:scm_piif}
        \resizebox{!}{0.15\textwidth}{\tikz{
                % nodes
                \node[latent] (E) {$E$};%
                \node[latent,below=of E,yshift=0.5cm] (S) {$S$}; %
                \node[obs,below=of E,xshift=-1.2cm,yshift=0.5cm] (Y) {$Y$}; %
                \node[obs,below=of E,xshift=1.2cm,yshift=0.5cm] (G) {$G$}; %
                \node[latent,below=of Y,xshift=1.2cm,yshift=0.5cm] (C) {$C$}; %
                \edge {E} {S}
                \edge {C} {Y,G}
                \edge {S} {G}
                \edge {Y} {S}
            }}}
    \vspace{-0.1in}
    \caption{Structural causal models for graph generation.}
    \label{fig:scm}
    % \vspace{-0.15in}
\end{figure}

\subsection{Data Generating Process}

We consider the graph generation process most widely discussed in the literature~\citep{wu2022handling,chen2022learning,miao2022interpretable,fan2022debiasing,li2022learning,chen2023does}. 
As shown in Fig.~\ref{fig:scm}, the observed graph $G$ consists of an underlying invariant subgraph $G_c$ and spurious subgraph $G_s$, which are generated under the control of the invariant latent factor $C$ and spurious latent factor $S$, respectively. $C$ causally determines $Y$ while $S$ could be affected by the changes in the environment $E$.
$C$, $S$ and $Y$ can exhibit two kinds of relations, i.e., Fully Informative Invariant Features (FIIF) when $Y\ind S|C$ and Partial Informative Invariant Features (PIIF) when $Y\not\ind S|C$. More details are included in Appendix~\ref{app:data_gen}.

\section{Related Work}
\label{sec:related_work}

\noindent{\bf Graph Invariance Learning.} In recent years, there has been an increasing focus on learning graph representations that are robust to distribution shifts, especially from the perspective of invariant learning. Some works involve environment inference~\cite{yang2022learning,li2022learning} or environment augmentation~\cite{wu2022discovering,Liu_2022} algorithms, which infer environmental labels, or perform environment augmentation, and then use this information to learn graph invariant features. Another line of work adopts alternative strategies to achieve invariant learning, without directly dealing with the unobserved environments~\cite{fan2022debiasing,chen2022learning,chen2023does}. For instance, CIGA~\cite{chen2022learning} utilizes contrastive learning within the same class labels, assume samples with the same label share invariant substructures; DisC~\cite{fan2022debiasing}, conversely, leverages biased information to initially learn a biased graph for subsequent invariance learning. However, most of these methods often rely on strong assumptions about the joint distribution $\mathbb{P}(S,Y)$, which can lead to potential failures in real-world scenarios for OOD generalization. In this work, we propose a new learning paradigm which induces a robust inductive bias by eliminating the reliance on the correlation between $S$ and $Y$. 

\noindent{\bf Identifiability in Self-Supervised Learning.} Self-supervised learning with augmentations has gained huge success in learning useful graph representations~\citep{dgi,graphcl}.
Existing analysis of self-supervised learning focuses on showing the desired property such as identifying the content from style~\citep{ssl_sep}, or invariant subgraph from spurious one~\citep{chen2022learning,chen2023does,mole_identify}. In contrast, we show that infomax principle tends to learn the spurious features, which can be leveraged to learn graph invariant features.

\section{Learning Spurious Features with Self-supervision}
\label{sec:self_sup_infomax}

In this section, we delve into how self-supervision can effectively identify spurious features with provable guarantee, which serves as \textit{a key step in our proposed algorithm}. Concretely, We show that by employing a self-supervised approach based on the infomax principle, we can decouple the supervised learning and the identification of $S$, while in the meantime reducing the reliance on the presuming spurious correlation strengths between $S$ and $Y$. First, we outline the infomax principle in Eqn. \ref{infomax_eq}.
\begin{equation}
\label{infomax_eq}
\max_\theta \frac{1}{|G||\mathcal{G}|} \sum_{G \in \mathcal{G}} \sum_{i \in |G|} I\left(\widehat{\mathbf{h}}_i; \widehat{\mathbf{h}}_G\right),
\end{equation}

where $\widehat{\mathbf{h}}_i$ and $\widehat{\mathbf{h}}_g$ denote the node and graph representations respectively, and $\theta$ denotes the parameters of the encoder.
The goal of Eqn. \ref{infomax_eq} is to maximize the MI between a global representation (e.g., a graph) and local parts of the inputs (e.g., nodes), which encourages the encoder to carry information presented in all locations. Intuitively, this maximization encourages the encoder to capture information presented across all locations. However, it is important to note that the global representations learned through Eqn. \ref{infomax_eq} might favor spurious correlations rather than causally-related high-level semantics, especially if the object of interest occupies a relatively small size within the global context. More formally, we present the following theorem:

\begin{theorem}\label{thm:spu_infomax}
    Given the same data generation process as in Fig.~\ref{fig:scm} with Shannon entropy $H(S)=H(C)=\delta_f$, assuming the node representations encode proper information of the underlying latent factors, i.e., $\delta_r\geq I(\widehat{\vh}_{i};C)- I(\widehat{\vh}_{i};S)\geq \delta_l, \forall i\in G_c$ and $\delta_r\geq I(\widehat{\vh}_{i};S)- I(\widehat{\vh}_{i};C)\geq \delta_l, \forall i\in G_s$, the graph representation $\widehat{\vh}_G \in \mathbb{R}^k$ have sufficient capacity to encode $k$ independent features $\{\widehat{\vh}_G[j]\}_{j=1}^k$ with $H(\widehat{\vh}_G[j])\leq \delta_f$, then, if $|G_s|/|G_c|> \delta_r/\delta_l$, the graph representation elicited by the infomax principle~(Eqn.~\ref{infomax_eq}) exclusively contain spurious features $S$, i.e.,
    \vspace{-0.1in}
        $$
        \mathbf{h}_s=\argmax _\theta \frac{1}{|G||\mathcal{G}|} \sum_{G \in \mathcal{G}} \sum_{i \in|G|} I\left(\widehat{\mathbf{h}}_i; \widehat{\mathbf{h}}_G\right).
        $$
    
\end{theorem}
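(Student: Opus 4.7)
The plan is to cast the infomax objective as an allocation problem: since the $k$ independent slots $\widehat{\vh}_G[j]$ each carry entropy at most $\delta_f=H(C)=H(S)$, the encoder essentially chooses how to divide its finite representational capacity between encoding $C$ and encoding $S$. I would then compare the two extremal strategies, $\widehat{\vh}_G \simeq C$ and $\widehat{\vh}_G \simeq S$, by evaluating the infomax sum $\sum_i I(\widehat{\vh}_i; \widehat{\vh}_G)$ on each, and show that the assumed size imbalance $|G_s|/|G_c| > \delta_r/\delta_l$ makes the $S$-strategy strictly dominant.

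Concretely, I would first invoke the data-processing inequality together with the sufficiency of each extremal encoder to compare the two objectives directly:
\begin{align*}
&\sum_{i\in G} I(\widehat{\vh}_i;\widehat{\vh}_G^{(S)}) - \sum_{i\in G} I(\widehat{\vh}_i;\widehat{\vh}_G^{(C)}) \\
&\quad= \sum_{i\in G_s}\bigl[I(\widehat{\vh}_i;S)-I(\widehat{\vh}_i;C)\bigr] + \sum_{i\in G_c}\bigl[I(\widehat{\vh}_i;S)-I(\widehat{\vh}_i;C)\bigr].
\end{align*}
Applying the hypothesis bounds node-by-node, the first sum is at least $|G_s|\,\delta_l$ and the second is at least $-|G_c|\,\delta_r$, so the total is at least $|G_s|\,\delta_l - |G_c|\,\delta_r$, which is strictly positive under the stated ratio. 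Hence the pure $S$-encoder strictly beats the pure $C$-encoder on the infomax objective.

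To promote this pairwise comparison into the full argmax claim, I would argue that no mixed encoder can exceed the $S$-extreme. The natural route is to use the stated independence of the slot features $\{\widehat{\vh}_G[j]\}$ together with the per-slot entropy cap $\delta_f$ to bound each slot's contribution to $\sum_i I(\widehat{\vh}_i;\widehat{\vh}_G)$ by the node-wise MI with whichever factor that slot effectively encodes. Any slot devoted to $C$-information then incurs an opportunity cost equal to the margin derived above, so summing over the $k$ slots shows every feasible mixed allocation is dominated by the pure $S$-allocation.

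The main obstacle will be making this mixed-encoder step watertight given the non-trivial dependence between $C$ and $S$ indicated by the dashed edge in Fig.~\ref{fig:graph_gen} and by the shared confounder $E$ in the FIIF and PIIF SCMs of Fig.~\ref{fig:scm_fiif}--\ref{fig:scm_piif}. Because encoding one factor partially discloses the other, care is needed to prevent double-counting the ``free'' information leaked from $C$ into an $S$-slot or vice versa. I would handle this by switching to the conditional mutual informations $I(\widehat{\vh}_i; C \mid S)$ and $I(\widehat{\vh}_i; S \mid C)$, and using the independence of the slot features to decompose $I(\widehat{\vh}_i;\widehat{\vh}_G)$ additively across slots before re-bundling the common part. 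Once that separation is in place, the counting argument above transfers, pinning the maximiser to a representation that contains exclusively the spurious features $S$.
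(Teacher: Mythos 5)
Your proposal is correct and follows essentially the same route as the paper: the paper also decomposes $I(\widehat{\vh}_i;\widehat{\vh}_G)$ additively across the $k$ independent slots and runs an exchange argument (switching any slot from encoding $C$ to encoding $S$ changes the objective by at least $|G_s|\delta_l-|G_c|\delta_r>0$), which is exactly your per-slot opportunity-cost step, with the same node-wise bounds. The one place you are more careful than the paper is the $C$--$S$ dependence: the paper simply substitutes $I(\widehat{\vh}_i;\widehat{\vh}_{i_c})=I(\widehat{\vh}_i;C)$ and treats the slot contributions as exactly additive without conditioning, so your proposed use of conditional mutual information to avoid double-counting would, if carried out, tighten a step the paper leaves implicit.
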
\vspace{-0.1in}

The proof of Theorem~\ref{thm:spu_infomax} is given in Appendix~\ref{proof:spu_infomax}. The characterization of features in the graph representation is motivated by the feature learning literature that neural networks tend to repeatedly encode features~\citep{addepalli2023feature_replication,rfc2}.
The key observation from Eqn.~\ref{infomax_eq} is based on the inductive bias that the spurious subgraph is usually \textit{larger} than the invariant across a variety of applications. For example, the biochemical property of molecular graphs is usually determined by a small functional group in a molecule~\citep{fragment}. On the other hand, the spurious subgraph such as the scaffold of the molecule usually takes a large part of the graph and easily biases the GNNs~\citep{ji2022drugood}.

In addition to the proof, we also conducted a empirical study using SPMotif datasets~\cite{wu2022discovering}, following the method proposed in \cite{kirichenko2023layer}. Specifically, we utilize the representations obtained through Eqn.~\ref{infomax_eq} to examine the extent to which these representations contain invariant features, as illustrated in Figure \ref{fig:emb_quality}: While the training accuracy is relatively high, there is a significant decrease in performance on the test set after feature reweighting (fine-tuning), compared to ERM. Specifically, the accuracy falls below 40\% on both datasets. This decline implies that the representations derived from Eqn.~\ref{infomax_eq} are predominantly composed of spurious features. More details about the experiments are included in Appendix~\ref{app:rep_quality}.

\begin{figure}[t]
    \centering
    \includegraphics[width=0.49\textwidth]{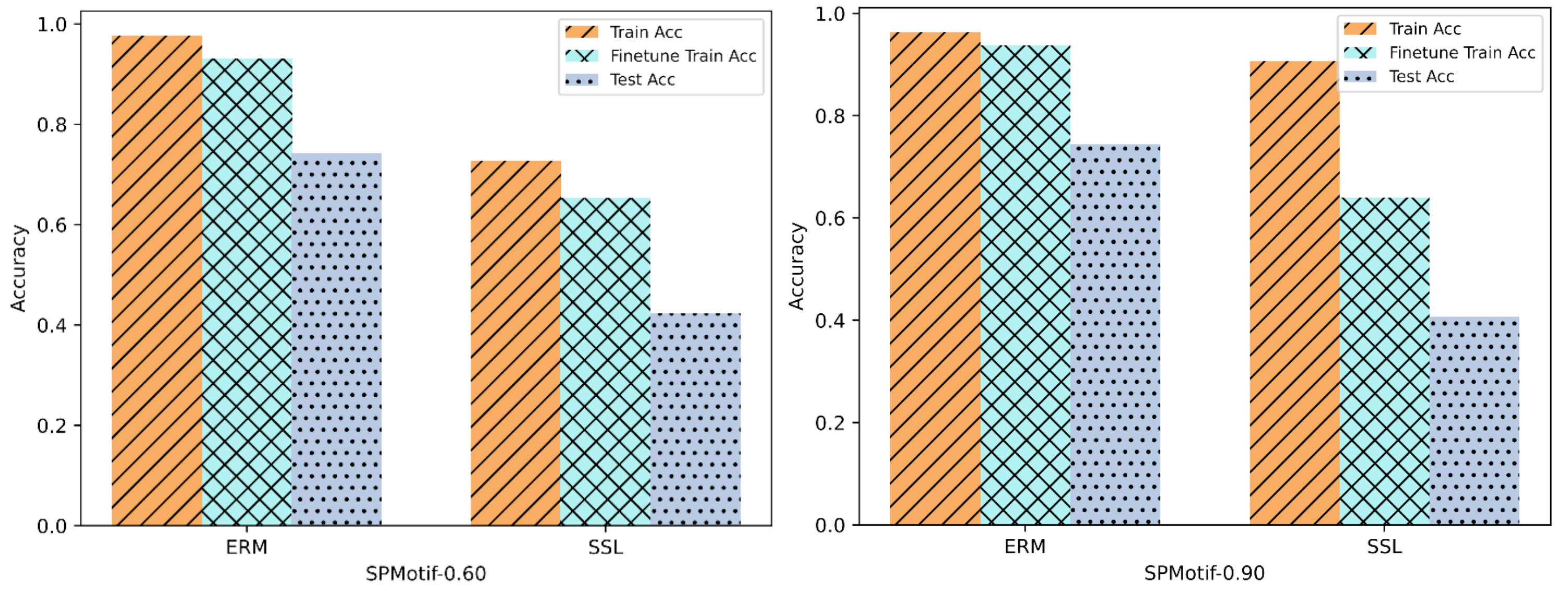}
    \vspace{-0.2in}
    \caption{Investigation of the representation quality of ERM and Infomax-Based SSL in capturing spurious features. The experimental results implies that Infomax-Based SSL primarily learns spurious correlations. Further details on the experimental setup are provided in Appendix~\ref{app:rep_quality}.}
    \label{fig:emb_quality}
    \vspace{-0.2in}
\end{figure}

\begin{figure*}[h]
\centering
\vspace{-1pt}
\scalebox{0.98}{
\includegraphics[width=0.95\textwidth]{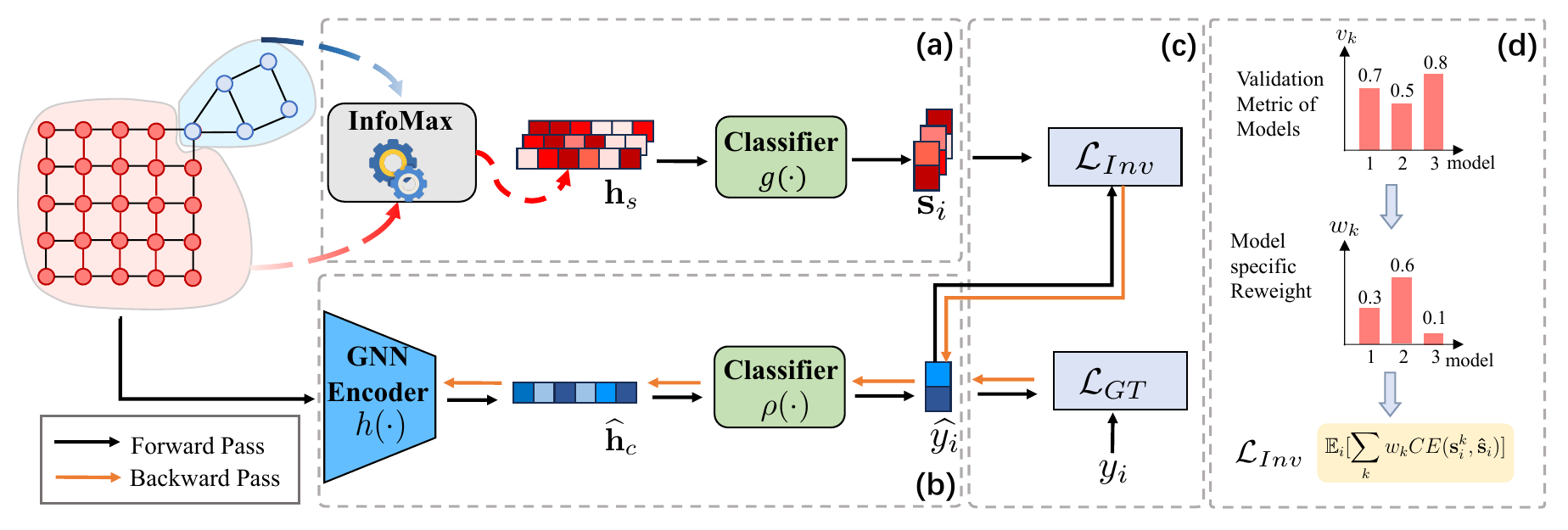}}
% \vspace{-12pt}
\caption{The overall framework of \ours. With an input graph consisting of $G_c$ (shown in blue) and $G_s$ (shown in red), the following procedures in \ours are illustrated: \textbf{(a)} Encoding and quantifying: First, the infomax-based SSL is performed to learn a collection of spurious representations (3 in this case), followed by $g(\cdot)$ to obtain the corresponding prediction logits as targets. \textbf{(b+c)} Decorrelation: In (b), a GNN encoder $h(\cdot)$ is re-trained from scratch to generate $\widehat{\vh}_c$ followed by a classifier $\rho(\cdot)$ to get the prediction $\widehat{y}_i$; In (c), $\widehat{y}_i$ is fed into loss function $\mathcal{L}=\mathcal{L}_{GT}+\lambda\mathcal{L}_{Inv}$ for learning invariant features, where $\widehat{\mathbf{s}}_i$ and $y_i$ serve as targets. \textbf{(d)} Detailed illustration of the process for model-specific reweighting. Finally, $\widehat{y}_i$ and $\widehat{\mathbf{s}}_i$ are obtained using the same classifier,i.e., $\rho(\cdot)=\rho^{\prime}(\cdot)$.}
\vspace{-0.1in}
\label{fig:equad}
\end{figure*}

\section{The \ourst Framework}
\label{sec:method}

So far, we have discussed how to obtain representations that solely comprise $\mathbf{h}_s$. Next, we introduce our proposed learning paradigm: \textit{Encoding, Quantifying, and Decorrelation}. This paradigm relies on the representations derived from infomax based self-supervised learning to achieve the decorrelation of $\mathbf{h}_s$ and $\widehat{\mathbf{h}}_c$, thereby obtaining invariant features for OOD generalization. The overall framework of \ours is illustrated in Figure~\ref{fig:equad}.

First, we introduce the following learning objective for subsequent discussion, whose optimal solution under both FIIF and PIIF, can elicit invariant representations.
\begin{equation}
\label{eq:prop1}
\max \; I\left(\widehat{\mathbf{h}}_c ; Y\right) \text {, s.t. } \widehat{\mathbf{h}}_{c} \ind E, \widehat{\mathbf{h}}_{c}=h(G), E \subseteq \mathcal{E}_{tr}.
\end{equation}
Eqn.~\ref{eq:prop1} is widely adopted in previous works~\cite{yang2022learning,chen2022learning,li2022learning,wu2022handling}, hence we omit the proof. To effectively solve Eqn. \ref{eq:prop1}, we first generate latent representations that maximally include $S$ with self-supervised learning. Then, we establish connections between $\widehat{\mathbf{h}}_{c}$ and $\mathbf{h}_{s}$, and transform $\mathbf{h}_{s}$ to a low-dimensional space. Finally, we leverage $\mathbf{h}_{s}$ (in the low-dimensional space) to recover $\mathbf{h}_{c}$ with our proposed  learning objective. Our approach is detailed as follows.

\textit{Step 1: Encoding.} In the first step, we utilize Eqn. \ref{infomax_eq} to train an encoder $h(\cdot)$ for generating representations that predominantly contain $\mathbf{h}_s$. However, due to potential optimization errors or model architectures, the representations learned may encompass only a subset of $\mathbf{h}_s$.  This limitation may impact the subsequent effectiveness of the decorrelation process. To mitigate this issue, we generate a collection of latent representations based on different training epochs and model architectures, i.e., $\mathcal{H}:=\left\{\mathbf{H}^{(i, j)} \in \mathbb{R}^{N \times F} \mid i \in [K], j \in \mathcal{P}\right\}$, aiming to comprehensively cover spurious features. Here, $\mathcal{P}$ denote a set of pre-defined epochs, and $K$ is the total number of model architectures, $N$ is the data sample size and $F$ is the embedding dimension. 

\textit{Step 2:Quantifying.} Having acquired a set of latent representations $\mathcal{H}$, we focus on the term $\widehat{\mathbf{h}}_c \ind E$ in Eqn. \ref{eq:prop1}. Assuming there exists an inverse and subjective function $f_{spu}^{-1}$ such that $E=\mathrm{f}_{s p u}^{-1}(S, C)$ under FIIF, and $E=\mathrm{f}_{s p u}^{-1}(Y, C)$ under PIIF. To fulfill the condition $\widehat{\mathbf{h}}_c \ind E$, we have the following optimization problem and its upper bound for FIIF:
\begin{equation}
\begin{aligned}
\min_{\widehat{C}} & \; I(E ; \widehat{C}) \\
= & I\left(f_{s p u}^{-1}(S, \widehat{C}) ; \widehat{C}\right) \\
\leq & I(S, \widehat{C} ; \widehat{C}) \\
= & I(S ; \widehat{C})+I(\widehat{C} ; \widehat{C} \mid S) \\
\leq & H(S)-H(S \mid \widehat{C}).
\end{aligned}
\end{equation}
The upper bound for PIIF can be derived in a similar manner. The first upper bound can be obtained as $f_{spu}^{-1}$ is a subjective function, and the second upper bound is due to that $I(\widehat{C} ; \widehat{C} \mid S)>0$. This upper bound is equivalent to the following optimization problem:
% \vspace{-0.1in}
\begin{equation}
\label{max_opt_CS}
\max \; H\big(\mathbf{h}_s \mid \widehat{\mathbf{h}}_c\big).
% \vspace{-0.1in}
\end{equation}
However, given that $\mathbf{h}_s$ are high-dimensional vectors, maximizing $H(\mathbf{h}_s \mid \widehat{\mathbf{h}}_c)$ remains a challenging task in practice. To make Eqn.~\ref{max_opt_CS} more tractable, we first transform $\mathbf{h}_s$ into a more compact representation while preserving essential information, through the following approach: We employ ground-truth labels $Y$ to train multiple classifiers $g: \mathbb{R}^F \rightarrow \mathbb{R}^{|\mathcal{C}|}$ (e.g., linear SVMs or MLPs) using $\mathcal{H}$ as inputs via ERM. Since $\mathcal{H}$ only contains a subset of $S$, $\mathbf{s}_i \in \mathbb{R}^{|\mathcal{C}|}$ can only depend on spurious features to make the prediction. Consequently, $\mathbf{s}_i$ can only reflect the correlation degree between the spurious pattern of sample $i$ and its corresponding label. Therefore, $\mathbf{s}_i$ can serve as a more compact representation of $\mathbf{h}_s$, also revealing side information about the training environments. With this quantification stage, we obtain the logits matrix $\mathcal{S}\!:=\left\{\mathbf{S}^{(i, j)} \in \mathbb{R}^{N \times \mathcal{|C|}} | \mathbf{S}^{(i, j)}=g(\mathbf{H}^{(i,j)},Y),i \in\ [K], j \in \mathcal{P}\right\}$. Figure~\ref{fig:logits_distribution} illustrates logits distribution for \textit{Cycle} class of SPMotif, which demonstrates the effectiveness of logits in identifying the spurious correlations for the data samples.

\begin{figure}[t]
    \centering
    % \vspace{-0.1in}
    \includegraphics[width=0.47\textwidth]{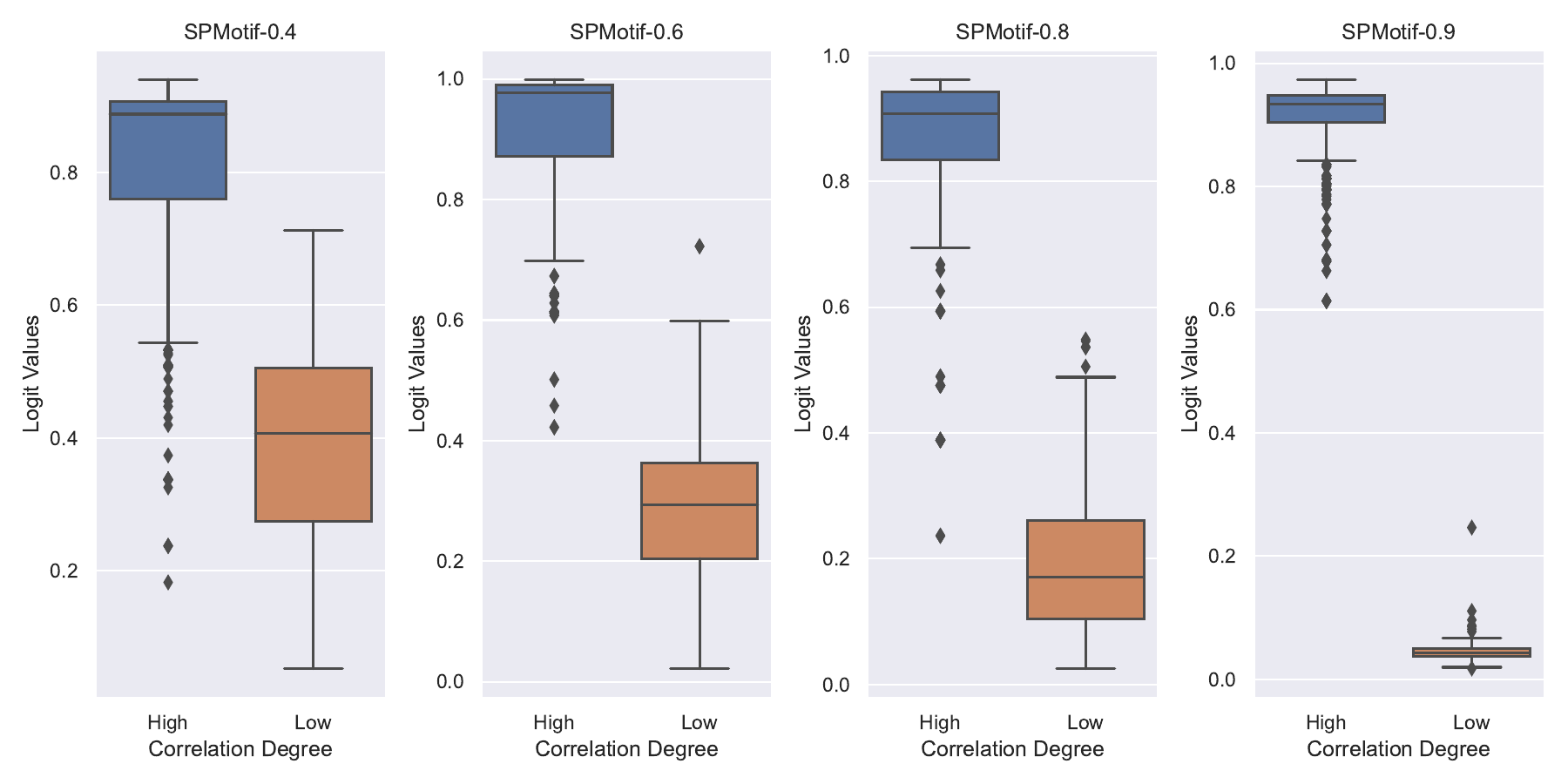}
    \vspace{-0.1in}
    \caption{Logits distribution of SPMotif datasets for the \textit{Cycle} class, where the samples are devided into two subgroups: the first subgroup consists of samples with a high correlation between $S$ and $Y$, and the second subgroup contains samples with a low correlation between $S$ and $Y$.}
    \label{fig:logits_distribution}
    \vspace{-0.1in}
\end{figure}

\textit{Step 3: Decorrelation.} Having obtained $\mathcal{S}$, we first formulate a learning objective to facilitate decorrelation of $\widehat{\mathbf{h}}_c$ and $\mathbf{h}_s$. We further refine this learning objective to mitigate the data imbalance problem, thus achieving better OOD generalization capability. Now our goal is to maximize the entropy term $H\big(\mathbf{s}_i \mid \widehat{\mathbf{h}}_c^{(i)}\big),\forall i \in [N]$ . To solve this problem, we propose the following learning objective:
\begin{equation}
\begin{aligned}
\label{inv_eq1}
&\underset{\rho, h}{\min} - \frac{1}{N} \; \sum_i \sum_j s_{ij} \log \big(\widehat{s}_{ij}\big), \\
&\text { s.t. } \widehat{\mathbf{s}}_i=\rho\big(\widehat{\mathbf{h}}_c^{(i)}\big)=\rho\left(h\left(G_i\right)\right), i \in [N], j \in [|\mathcal{C}|].
\end{aligned}
\vspace{-0.03in}
\end{equation}
Here, $\widehat{\mathbf{s}}_i$ is the estimated logits vector, and $s_{ij}$ is the scalar logit value for class $j$. $\mathbf{s}_i$ is the target normalized logits vector drawn from one of $\mathcal{S}$. In step 3, the encoder $h(\cdot)$ and classifier $\rho(\cdot)$ are trained from scratch to generate invariant representations. Here we offer an intuitive explanation of why Eqn. \ref{inv_eq1} achieve decorrelation of $\widehat{\mathbf{h}}_c$ and $\mathbf{h}_s$ by presenting a toy example as following.

\textit{Example.} Considering a set of positive data points \(\mathcal{D}_+\) with label \(y=1\), if $g(\cdot)$ divides \(\mathcal{D}_+\) into two equal-sized subsets: \(\mathcal{D}_+^1\) exploiting strongly correlated spurious patterns, and \(\mathcal{D}_+^2\) without such patterns. Assume \(\mathcal{D}_+^1\) has prediction logits of the form \((p, 1\!-\!p)\), e.g., \((0.9, 0.1)\), while \(\mathcal{D}_+^2\) has the inverse logits \((1\!-\!p, p)\), e.g., \((0.1, 0.9)\). It can be shown that the estimated logits \(\widehat{\mathbf{s}}_i\) minimizing Eqn. \ref{inv_eq1} is \((0.5, 0.5)\). In other words, $H\big(\mathbf{s}_{i} \! \mid \! \widehat{\mathbf{h}}_{c}^{(i)}\big)$ is maximized, regardless of the value of $p$. Formally, we present the following theorem:

\begin{theorem}
\label{theorem:inv_eq}
Let $s_{iy} \in \mathbb{R}_+$ denote the logits value for samples whose class label $Y=y$ and belonging to bin $B_i$, and $d(\cdot,\cdot)$ denote the distance function. Assuming that for each class label $Y=y$, there exists $2K$ bins $B_i \in \mathcal{B}, \forall i \in [2K]$ with equal sample size, furthermore, the $2K$ bins can be arranged into $K$ pairs of bins that are symmetrically located around the value $0.5$, i.e., $|s_{iy}|=|s_{(i+K)y}|,\forall i \in [K]$, and $ d(\mathbb{E}[s_{iy}],0.5)=d(\mathbb{E}[s_{(i+K)y}],0.5)$, where $sign(\mathbb{E}[s_{iy}]-0.5)=sign(0.5-\mathbb{E} [s_{(i+K)y}])$. Under these conditions, Eqn. \ref{inv_eq1}, serving as a penalty term for ERM, achieves unique and optimal solution, i.e., $\widehat{\mathbf{h}}_c=\mathbf{h}_c$ and $\widehat{\mathbf{h}}_c \ind \mathbf{h}_s$.
\end{theorem}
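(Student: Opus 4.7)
The plan is to analyze the combined objective $\mathcal{L}_{GT}+\lambda\mathcal{L}_{Inv}$ and argue that, under the symmetric-pairing hypothesis, the unique minimizer is an invariant encoder that recovers $\mathbf{h}_c$.

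First, I would fix a class $y$ and use the pairing hypothesis to compute the class-conditional mean of the soft-target logits. Since $|s_{iy}|=|s_{(i+K)y}|$, $d(\mathbb{E}[s_{iy}],0.5)=d(\mathbb{E}[s_{(i+K)y}],0.5)$, and the two expectations fall on opposite sides of $0.5$, each symmetric pair contributes exactly $1$ in every coordinate, so averaging over the $2K$ equal-sized bins yields the uniform target $(0.5,\ldots,0.5)$ for every class $y$. This establishes that the class-conditional marginal of $\mathbf{s}_i$ carries no informative structure beyond $Y$ itself, matching the toy example where the minimizer collapses to $(0.5,0.5)$.

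Second, I would decompose any candidate representation as $\widehat{\mathbf{h}}_c=\phi_c(\mathbf{h}_c)\oplus\phi_s(\mathbf{h}_s)$ and compare the combined loss across different choices of $\phi_s$. Under the invariant regime $\phi_s\equiv 0$, the prediction $\rho(\widehat{\mathbf{h}}_c)$ is constant within each class, and by Step~1 the optimal constant for the penalty is the uniform marginal, on which the ERM term still pulls cleanly toward the one-hot class label $y$. Under any non-invariant regime, the classifier would exploit $\phi_s$ to fit individual $\mathbf{s}_i$, but within a symmetric pair of bins sharing the same $y$ this forces $\rho$ to emit opposite soft predictions while ERM simultaneously demands the same one-hot; a strict-convexity argument applied to the Bregman divergence of the negative-entropy functional then shows that this tension incurs a strictly larger combined loss than the invariant solution. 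Combined with the upper-bound chain leading to Eqn.~\ref{max_opt_CS}, this yields $\widehat{\mathbf{h}}_c\ind\mathbf{h}_s$ and establishes uniqueness.

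Third, I would combine the independence conclusion $\widehat{\mathbf{h}}_c\ind\mathbf{h}_s$ with the ERM term to pin down the exact representation. Since the data-generating process in Fig.~\ref{fig:scm} makes $Y$ a function of $\mathbf{h}_c$ alone and the composition $\rho\circ h$ must minimize the supervised loss, the only feasible representation that is both independent of $\mathbf{h}_s$ and sufficient for $Y$ is $\widehat{\mathbf{h}}_c=\mathbf{h}_c$ (up to the invertible map induced by $\rho$), yielding the theorem's conclusion.

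The main obstacle will be making Step~2 rigorous: translating the qualitative ``opposite pulls cancel within a pair'' intuition into a strict combined-loss gap between invariant and non-invariant encoders. This amounts to checking that the symmetric-pairing hypothesis is precisely strong enough to preclude mixed minimizers that partially encode $\mathbf{h}_s$, which I would attack via an explicit second-order expansion of the cross-entropy around the class-conditional uniform, bounding the penalty contribution of a nonzero $\phi_s$ against the ERM contribution it saves and leveraging the $|s_{iy}|=|s_{(i+K)y}|$ equality to cancel the first-order terms.
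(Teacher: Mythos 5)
Your Step 1 reproduces the paper's Proposition (the class-conditional average of the symmetric pairs is the uniform $(0.5,\dots,0.5)$, which is exactly how the paper establishes the necessity direction), and your Step 3 matches the paper's closing argument. The genuine gap is in Step 2, and it is not merely a matter of making the intuition rigorous --- the claim as you state it is false without a condition on $\lambda$. The key point you are missing: a non-invariant encoder that reads off $\mathbf{h}_s$ does \emph{not} pay a penalty price for fitting the individual targets $\mathbf{s}_i$; it \emph{lowers} $\mathcal{L}_{Inv}$, because the cross-entropy $-\sum_j s_{ij}\log\widehat{s}_{ij}$ is minimized pointwise at $\widehat{\mathbf{s}}_i=\mathbf{s}_i$, not at the class-conditional uniform. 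The uniform $0.5$ is only the optimum under the \emph{constraint} that the prediction is constant across a symmetric pair of bins --- i.e., precisely when the encoder is already invariant. So your strict-convexity/Bregman argument around the class-conditional uniform shows that class-constant predictions deviating from $0.5$ increase the penalty, but it says nothing about the non-class-constant predictions that a spurious encoder produces. Consequently there is no unconditional strict combined-loss gap: the invariant solution has (in the paper's $0$--$1$ accounting) $\mathcal{L}_{GT}=0$ and $\mathcal{L}_{Inv}=n$ per pair, while the fully spurious solution has $\mathcal{L}_{GT}=n$ and $\mathcal{L}_{Inv}=0$, so the comparison of $\lambda n$ versus $n$ flips at $\lambda=1$.

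The paper's sufficiency proof is therefore structured as an explicit three-case comparison (pure $C$, pure $S$, and a Bernoulli$(r)$ mixture giving the convex combination $(1-r)n$ and $rn$ of the two risks), concluding that the invariant solution is the unique minimizer only for $\lambda<1$ (and, for the mixed case, for ``a suitable $\lambda$''); the paper even corroborates this empirically by showing performance collapses when $\lambda>1$. To repair your proposal you would need to (i) restrict attention to $\lambda$ in the admissible range rather than asserting an unconditional gap, and (ii) replace the expansion around the class-conditional uniform with a comparison that accounts for the fact that encoding $\phi_s$ strictly \emph{decreases} the penalty term while increasing the ERM term, so that the conclusion is a trade-off governed by $\lambda$ rather than a one-sided convexity bound.
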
 
\vspace{-0.1in}

A formal proof is provided in Appendix~\ref{proof:theo_inv_eq}, where we prove for both necessity and sufficiency for the optimal solution for Eqn.~\ref{inv_eq1}.

\textit{Mitigating data imbalance.} Theorem \ref{theorem:inv_eq} assumes an equal distribution of samples across all bins for every label $Y$. However, in real-world scenarios, certain spurious patterns, which are highly correlated with $Y$, often dominate, leading to a disproportionate accumulation of samples in specific bins. This data imbalance can hurt the optimality of Eqn. \ref{inv_eq1}. To address this, we propose a novel sample reweighting approach that increases the weight of the minority group to achieve a more balanced distribution of data across different levels of correlation under the same target label $Y$, thereby facilitating the decorrelation of $\widehat{\mathbf{h}}_c$ and $\mathbf{h}_s$. Specifically, we adopt the following reweighting function: $w_{iy}:=w\left(s_{iy} ; \gamma\right)=\frac{1-s_{iy}^\gamma}{\gamma}, \; s.t. \; \forall i \in [N], Y(i)=y$, where $\gamma$ is a hyperparameter to control the smoothness of the function. For all samples associated with the same $y$, $w(\cdot)$ will assign greater weight to samples where $s_{iy}$ is closer to zero, indicating that these are minority samples whose spurious features less frequently co-occur with the prediction. Moreover, considering that a single prediction logits matrix $\mathbf{S} \subset \mathcal{S}$ might only capture a subset of the spurious features, thus limiting the identification of $\mathbf{h}_s$, we draw multiple logits matrices from $\mathcal{S}$ for better prediction of correlation degree of spurious features. Finally, we introduce a model-specific reweighting strategy to direct the loss term towards higher-quality target logits vectors $\mathbf{s}_i^k, \forall k \in [K]$, where $K$ represents the number of prediction logits matrices. The quality of $\mathbf{S}_k$ is assessed based on the validation metric: \textit{A lower validation metric indicates a reduced effectiveness of invariant features, thereby more accurately reflecting the correlation degree of spurious patterns.} Specifically, we adopt temperature-scaled softmax function for model-specific reweighting, i.e., $w_k:=m(v_k;\tau)=\frac{\exp \left(\frac{-v_k}{\tau}\right)}{\sum_{j=1}^K \exp \left(\frac{-v_j}{\tau}\right)}$, where $\tau$ is the temperature, and $v_k$ is the validation metric from model $k$. Finally, we arrive at the following refined objective:
\begin{equation}
\label{inv_eq2}
\begin{aligned}
& \min _{\rho, h}-\frac{1}{N K} \sum_i \sum_j \sum_k w_{ij} w_{k} s_{i j}^k \log \left(\widehat{s}_{i j}\right), \\
& \text { s.t. } \widehat{\mathbf{s}}_i\!=\!\rho\left(\widehat{\mathbf{h}}_c^{(i)}\right)\!=\!\rho\left(h\left(G_i\right)\right)\!, i \in[N], j \in[|C|], k \in[K].
\end{aligned}
\end{equation}
Here $w_{ij}$ and $w_k$ are the sample-specific and model-specific reweighting coefficients respectively. Finally, to obtain the $K$ prediction logits matrices from $\mathcal{S}$, we can also use the validation metric (e.g., validation accuracy) as a measure to gauge the extent of spurious features in the representations. Let $\mathcal{L}_{Inv}(\rho,h)$ denote Eqn.~\ref{inv_eq2}, and let $\mathcal{L}_{GT}$ denote the supervised training loss, i.e., $\mathcal{L}_{GT}(\rho^\prime,h)=\sum_{i=1}^N CE(y_i,\widehat{y_i})$, where $CE$ denote the cross-entropy loss for ground-truth label $y_i$ and predicted class label $\widehat{y_i}$. In the specific implementation, $\rho(\cdot)$ and $\rho^{\prime}(\cdot)$ share the same model parameters. Finally, the loss function for EQuAD is:
\begin{equation}
\label{final_loss}
\mathcal{L}=\mathcal{L}_{GT}+\lambda \mathcal{L}_{Inv},
\end{equation}
where $\lambda$ controls the strength of the decorrelation loss $\mathcal{L}_{Inv}$.

\section{Experiments}
\label{sec:exp}

In this section, we conduct extensive experiments to answer the following research questions. 

% \textbf{RQ1)} Does EQuAD achieve better or comparable predictive performance than state-of-the-art methods? \textbf{RQ2)} How can we examine and interpret the latent representation induced by the GNN encoder in step 3? \textbf{RQ3)} How does each component in EQuAD contributes to the final performance?

\vspace{-0.1in}
\begin{itemize}
% \addtolength{\itemsep}{-0.0in}
    \item \textbf{RQ1)} Does EQuAD achieve better or comparable predictive performance than state-of-the-art methods?
    \item \textbf{RQ2)} How can we examine and interpret the latent representation induced by the GNN encoder in step 3?
    \item \textbf{RQ3)} How does each component in EQuAD contributes to the final performance?
    % \item \textbf{RQ4)} Is the performance of EQuAD sensitive to various hyperparameters?
\end{itemize}
\vspace{-0.1in}

\begin{table*}[t]
\centering
\caption{Experiment results on synthetic datasets.}
\scalebox{0.92}{
\begin{tabular}{@{}p{1.5cm}|llll|lll@{}}
\toprule
\multicolumn{1}{c|}{\multirow{2}{*}{Methods}} & \multicolumn{4}{c|}{SPMotif}                       & \multicolumn{3}{c}{Two-piece graph}  \\ \cmidrule(l){2-8} 
\multicolumn{1}{c|}{} &
  \multicolumn{1}{c}{$b=0.33$} &
  \multicolumn{1}{c}{$b=0.40$} &
  \multicolumn{1}{c}{$b=0.60$} &
  \multicolumn{1}{c|}{$b=0.90$} &
  \multicolumn{1}{c}{$(0.80,0.70)$} &
  \multicolumn{1}{c}{$(0.80,0.90)$} &
  \multicolumn{1}{c}{$(0.70,0.90)$} \\ \midrule
ERM                                           & 53.40±2.20  & 62.19±3.26 & 55.24±2.43 & 49.41±3.78 & 75.65±1.62 & 51.37±1.20 & 42.73±3.82 \\
IRM                                           & 58.31±2.59  & 55.71±6.37 & 58.76±1.98 & 42.11±4.14 & 75.13±0.77 & 50.76±2.56 & 41.32±2.50 \\
V-Rex                                         & 56.12±4.76  & 60.08±4.11 & 58.91±4.45 & 42.32±3.48 & 74.96±1.40 & 49.47±3.36 & 41.65±2.78 \\
IB-IRM                                        & 60.96±3.19  & 57.52±4.84 & 58.51±3.57 & 47.01±4.07 & 73.93±0.79 & 50.93±1.87 & 42.05±0.79 \\
EIIL                                          & 59.87±2.19  & 57.73±5.09 & 53.42±3.84 & 42.58±5.42 & 74.25±1.74 & 51.45±4.92 & 39.71±2.64 \\ \midrule
GREA                                          & 59.27±3.45  & 62.46±4.28 & 61.04±5.21 & 58.63±1.52 & 82.72±0.50 & 50.34±1.74 & 39.01±1.21 \\
GSAT                                          & 52.48±6.55  & 60.17±3.42 & 60.42±3.08 & 56.22±5.84 & 78.11±1.23 & 48.63±2.18 & 36.62±0.87 \\
GIL                                           & 57.92±5.03  & 65.34±3.24 & 58.86±7.25 & 57.09±7.33 & 82.67±1.18 & 51.76±4.32 & 40.07±2.61 \\
DisC                                          & 49.79±6.01  & 55.22±4.75 & 47.22±8.97 & 50.51±4.39 & 54.29±15.0 & 45.06±7.82 & 39.42±8.59 \\
CIGA                                          & 72.91±1.92  & 67.96±5.27 & 67.31±6.84 & 58.87±5.93 & 83.21±0.30 & 57.87±3.38 & 43.62±3.20 \\
GALA                                          & 66.96±5.18 & 65.38±3.68 & 63.25±3.11 & 62.07±2.20 & \textbf{83.65±0.44} & 62.25±3.71 & 49.65±3.93 \\ \midrule
EQuAD &
  \textbf{74.61±1.23} &
  \textbf{73.13±1.56} &
  \textbf{71.93±1.94} &
  \textbf{69.47±2.06} &
  82.76±0.71 &
  \textbf{75.81±0.51} &
  \textbf{71.95±1.41} \\ \bottomrule
\end{tabular}}
\label{tab:spmotif}
\end{table*}

\begin{figure}[t]
\centering
% \vspace{-1pt}
\scalebox{0.9}{
\includegraphics[width=0.48\textwidth]{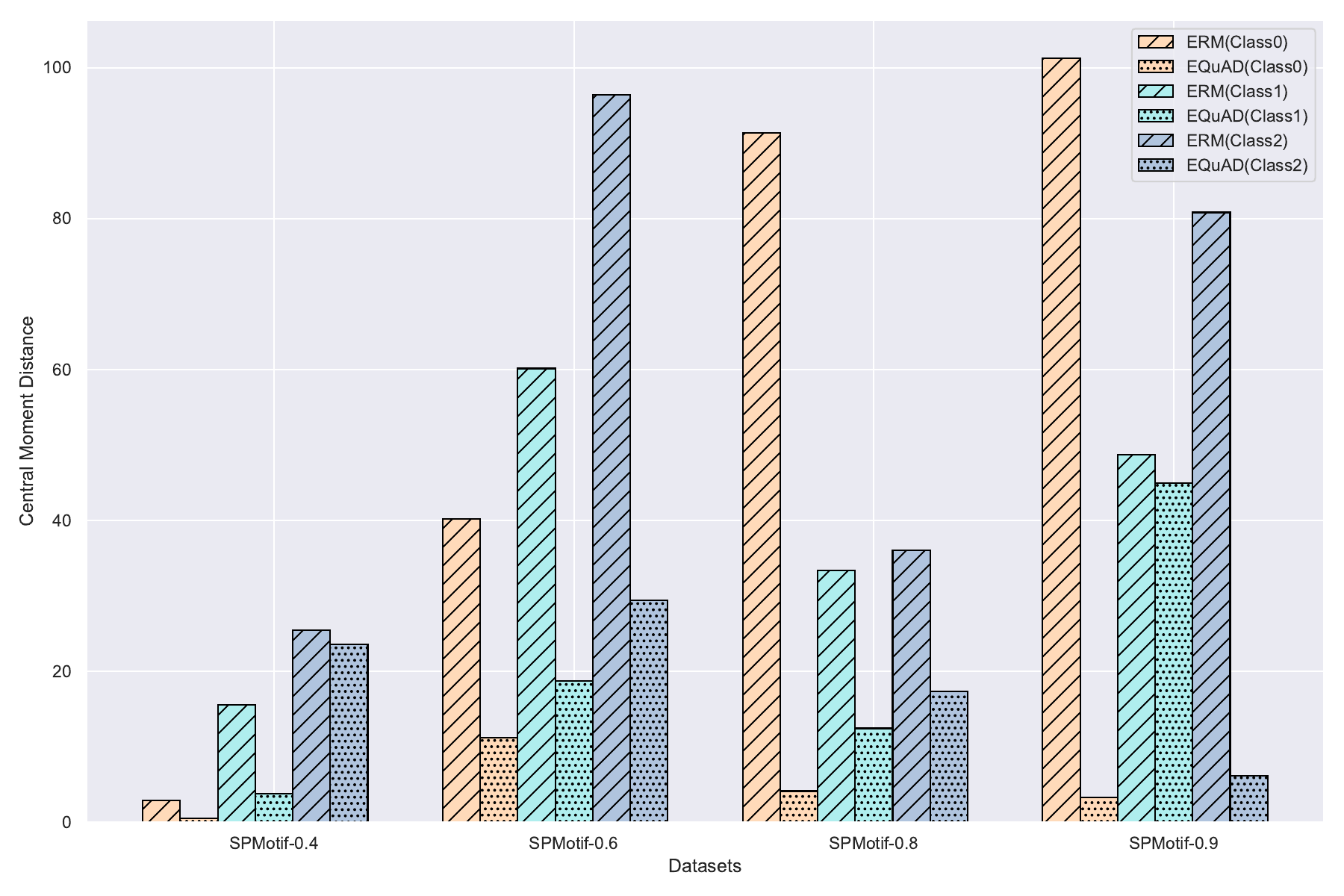}}
\vspace{-5pt}
\caption{Investigation of the central moment distance of latent embeddings from training and validation set respectively using ERM and EQuAD. For all three classes across various SPMotif datasets, the distances for representations obtained via EQuAD are notably smaller compared to those from ERM.} 
\label{fig:cmd}
\vspace{-10pt}
\end{figure}

\subsection{Experimental Setting}

\textbf{Datasets.} To comprehensively evaluate our proposed method under two data generating assumptions, namely FIIF and PIIF, we utilize the SPMotif datasets~\cite{wu2022discovering} and Two-piece graph datasets \cite{chen2023does} to verify its effectiveness. Additionally, for real-world datasets, we employ the DrugOOD datasets~\cite{ji2022drugood}, which focus on the challenging task of AI-aided drug affinity prediction. We adopt 6 DrugOOD subsets, including splits using Assay, Scaffold, and Size from the IC50 and EC50 category respectively. Moreover, we also consider two molecule datasets, MolBACE and MolBBBP, from Open Graph Benchmark \cite{hu2021open}, where different molecules are structurally separated into different subsets, which provides a more realistic estimate of model performance in experiments \cite{wu2018moleculenet}. More details about datasets are included in Appendix \ref{app:datasets}. 

\textbf{Baseline methods.} Besides ERM \cite{vapnik95}, we compare our method with state-of-the-art OOD methods from the Euclidean regime, including IRM \cite{arjovsky2020invariant}, VREx \cite{krueger2021outofdistribution}, EIIL \cite{creager2021environment}, IB-IRM \cite{ahuja2022invariance}, Coral \cite{sun2016deep} and MixUp \cite{zhang2018mixup}. For graph-specific algorithms, we include GREA \cite{Liu_2022}, DIR \cite{wu2022discovering}, GSAT \cite{miao2022interpretable}, CAL \cite{Sui_2022}, DisC \cite{fan2022debiasing}, MoleOOD \cite{yang2022learning}, GIL \cite{li2022learning}, CIGA \cite{chen2022learning}, GALA \cite{chen2023does} and iMoLD \cite{zhuang2023learning} as strong competitive baseline methods. For all baseline methods and EQuAD, we use GIN \cite{xu2019powerful} as backbone encoder, and use Adam \cite{kingma2017adam} as the optimizer, for fair comparisons.  

\textbf{Evaluation.} For SPMotif datasets and Two-piece graph datasets, the task is a 3-class classification, we adopt \textit{accuracy} as the evaluation metric. For DrugOOD datasets and the two molecular datasets, we perform binary classification using \textit{AUC} as the evaluation metric. To investigate the distribution discrepancy of two sets of embeddings, we adopt \textit{central moment distance} \cite{zellinger2019central} as a quantitative measure.

\subsection{Main Results (RQ1)}

\begin{table*}[t]
\centering
\caption{Experiment results on real-world datasets. \ours outperforms state-of-the-art methods in most of the datasets with various types of distribution shifts. First and second best methods are denoted by \textbf{bold} and \underline{underline} respectively.}
\scalebox{0.92}{
\begin{tabular}{@{}p{1.5cm}cccccccc@{}}
\toprule
\multirow{2}{*}{Method} & \multicolumn{3}{c}{IC50} & \multicolumn{3}{c}{EC50} & \multirow{2}{*}{BACE} & \multirow{2}{*}{BBBP} \\ \cmidrule(lr){2-7}
        & Assay       & Scaffold    & Size        & Assay       & Scaffold    & Size        &   & \multicolumn{1}{l}{} \\ \midrule
ERM     & 71.63±0.76 & 68.79±0.47 & 67.50±0.38 & 67.39±2.90 & 64.98±1.29 & 65.10±0.38 & 77.83±3.49 & 66.93±2.31                    \\
IRM     & 71.15±0.57 & 67.22±0.62 & 61.58±0.58 & 67.77±2.71 & 63.86±1.36 & 59.19±0.83 & 79.47±1.86 & 68.92±0.53                    \\
Coral   & 71.28±0.91 & 68.36±0.61 & 64.53±0.32 & 72.08±2.80 & 64.83±1.64 & 58.47±0.43 & - & -                    \\
MixUp   & 71.49±1.08 & 68.59±0.27 & 67.79±0.39 & 67.81±4.06 & 65.77±1.83 & 65.77±0.60 & - & -                    \\ 
V-Rex   & 71.32±1.17 & 67.30±1.27 & 64.46±0.79 & 75.57±2.17 & 64.73±0.53 & 62.80±0.89 & - & -                    \\
IB-IRM   & 68.22±0.54 & 59.38±0.49 & 58.25±2.40 & 64.70±2.50 & 62.62±2.05 & 58.28±0.99 & - & -                    \\
EIIL     & 70.58±1.56 & 67.02±0.46 & 61.58±0.58 & 64.20±5.40 & 62.88±2.75 & 59.58±0.96 & - & -                    \\ \midrule
DIR     & 69.84±1.41 & 66.33±0.65 & 62.92±1.89 & 65.81±2.93 & 63.76±3.22 & 61.56±4.23 & 79.93±2.03 & 69.63±1.54                    \\
GSAT    & 70.59±0.43 & 66.45±0.50 & 66.70±0.37 & 73.82±2.62 & 64.25±0.63 & 62.65±1.79 & 79.63±1.87 & 68.48±2.01                    \\
GREA    & 70.23±1.17 & 67.02±0.28 & 66.59±0.56 & 74.17±1.47 & 64.50±0.78 & 62.81±1.54 & \textbf{82.37±2.37} & 69.70±1.28                    \\
CAL     & 70.09±1.03 & 65.90±1.04 & 66.42±0.50 & 74.54±4.18 & 65.19±0.87 & 61.21±1.76 & - & -                    \\
DisC    & 61.40±2.56 & 62.70±2.11 & 61.43±1.06 & 63.71±5.56 & 60.57±2.27 & 57.38±2.48 & - & -                    \\
MoleOOD & 71.62±0.52 & 68.58±1.14 & 65.62±0.77 & 72.69±1.46 & 65.74±1.47 & 65.51±1.24 & 81.09±2.03  & \underline{69.84±1.84}                     \\
CIGA    & 71.86±1.37 & \underline{69.14±0.70} & 66.92±0.54 & 69.15±5.79 & 67.32±1.35 & 65.65±0.82 & 80.98±1.25  & 69.65±1.32                     \\ 
iMoLD    & \underline{72.11±0.51} & 68.84±0.58 & \underline{67.92±0.43} & \underline{77.48±1.70} & \underline{67.79±0.88} & \textbf{67.09±0.91} & -  & -                     \\ \midrule
EQuAD   & \textbf{73.26±0.47} & \textbf{69.78±0.41} & \textbf{68.19±0.24} & \textbf{79.36±0.73} & \textbf{68.12±0.48} & \underline{66.37±0.64} & \underline{81.83±2.67} & \textbf{71.22±1.47}                    \\ \bottomrule
\end{tabular}}
\vspace{-0.05in}
\label{tab:realworld_perf}
\end{table*}

\textbf{Synthetic datasets.} Our experimental results on two synthetic datasets are reported in Table \ref{tab:spmotif}. EQuAD demonstrates superior performance on these datasets across varying degrees of bias. The results from SPMotif indicate that EQuAD maintains stable performance under different levels of spurious correlation, consistently outperforming other baseline methods. In the context of PIIF, particularly when $H(S|Y)\!\!<\!\!H(C|Y)$, e.g., $(\alpha=0.8,\beta=0.9)$ and $(\alpha=0.7,\beta=0.9)$, environment inference algorithms(e.g., MoleOOD \cite{yang2022learning} and GIL \cite{li2022learning}) and environment augmentation algorithms (e.g., GREA \cite{Liu_2022} and DIR \cite{wu2022discovering}) all fail catastrophically. When the correlation between $S$ and $Y$ strengthens, these algorithms inevitably learn $C$ in their representations, failing to accurately isolate $S$, which adversely affects subsequent invariance learning. Moreover, when $H(S|Y) < H(C|Y)$, CIGA's \cite{chen2022learning} objective fails to correctly identify $G_c$, and GSAT \cite{miao2022interpretable}, based on the Information Bottleneck principle \cite{tishby2015deep}, discards information about $C$ as $S$ reveals more information about $Y$. The key to EQuAD's success is its ability to identify $S$ through self-supervised global-local mutual information maximization, which does not depend on label $Y$. Therefore, the strong association between $S$ and $Y$ does not impede the identification of $S$. It is noteworthy that GALA \cite{chen2023does} employs data sampling for spurious subgraphs to address data imbalance. In our experiments, we exclude the data sampling technique for GALA, and similarly, EQuAD is tested without this strategy to ensure fair comparisons.

\textbf{Real-world datasets.} The effectiveness of EQuAD is further demonstrated through its performance on real-world datasets, as in Table \ref{tab:realworld_perf}. In these practical scenarios, EQuAD consistently outperforms other methods, achieving state-of-the-art results across most datasets. Notably, EQuAD shows significant improvements over both environment inference and environment augmentation algorithms in almost all datasets. This indicates that our approach, leveraging infomax based self-supervision, is more adept at identifying $S$, thereby facilitating the learning of invariant features. Furthermore, EQuAD consistently outperforms CIGA, which is known to reliably identify $G_c$ when $H(C|Y) < H(S|Y)$. This highlights the effectiveness of EQuAD in directly learning invariant representations from the latent space. DisC also adopt a similar idea as EQuAD, i.e., to first identity spurious features ( biased graphs), and then utilize the spurious features for the desired invariant features (learn invariant features on debiased graphs), however, DisC~\cite{fan2022debiasing} relies on the strong correlation between $S$ and $Y$ to learn the biased graphs, which hinders its performance, while EQuAD resort to self-supervision to reliably extract spurious patterns which is orthogonal to $\mathbb{P}(S,Y)$. Although iMoLD~\cite{zhuang2023learning} also employs a self-supervised objective for invariance learning to encourage the separation of $S$ and $C$, their approach still relies on a parametric model with labeled supervision to obtain $\widehat{S}$. Consequently, $\widehat{S}$ may retain a subset of $C$, potentially compromising the efficacy of invariant feature learning.

\subsection{Visualization (RQ2)}

This section delves into the quality of the latent representation derived from the encoder $h(\cdot)$ in step 3. We explore the distribution discrepancy between latent embeddings from the training set and those from the validation set. More visualizations on latent representation with t-SNE\cite{JMLR:v9:vandermaaten08a} plots in a 2-dimensional space are provided in Appendix~\ref{app:exp}. 

\begin{figure}[t]
\centering
\vspace{-1pt}
\scalebox{0.9}{
\includegraphics[width=0.48\textwidth]{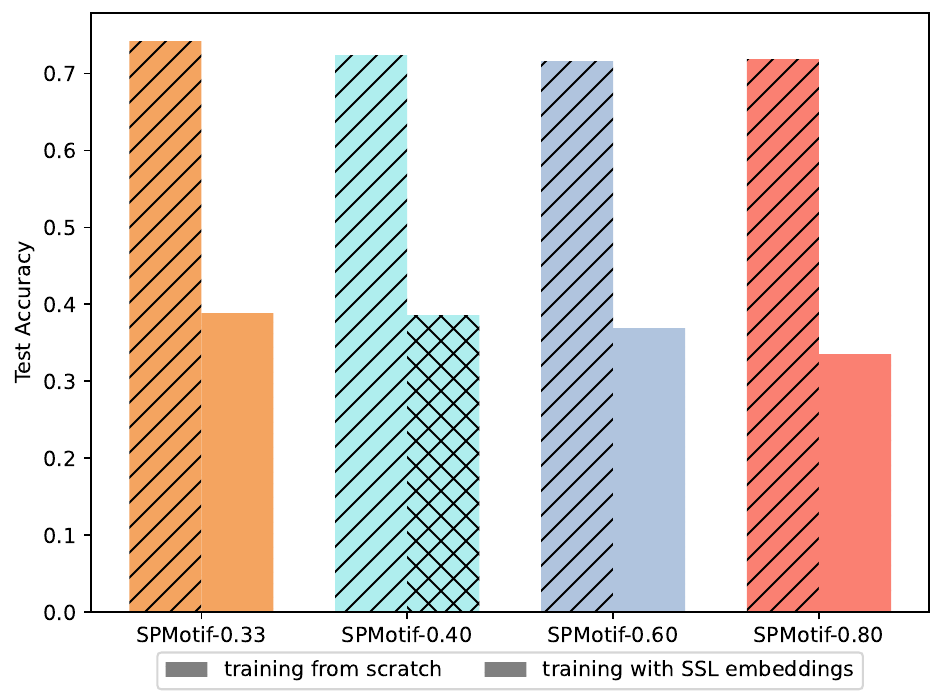}}
\vspace{-10pt}
\caption{Ablation study on latent representations, which demonstrates the necessity of training a model $f(\cdot)$ from scratch for effective invariance learning.} 
\label{fig:ssl_testAcc}
\vspace{-0.15in}
\end{figure}

\textbf{Analysis of Distribution Discrepancy in Latent Representations.} We investigate the discrepancy in the distribution of latent representations between the training and validation sets. To quantify this discrepancy, we employ \textit{the central moment distance} (CMD), a metric proposed in \cite{zellinger2019central} for domain-invariant representation learning in neural networks. As illustrated in Figure \ref{fig:cmd}, across all three ground-truth labels, the representations obtained through EQuAD exhibited relatively smaller distribution discrepancies between training and validation sets compared to those obtained through ERM. This suggests EQuAD can achieve superior OOD generalization capabilities. 

\begin{table}[t]
\centering
\caption{Ablation study on model-specfic reweighting.}
\resizebox{0.48\textwidth}{!}{%
\begin{tabular}{@{}p{2.0cm}lll@{}}
\toprule
            & EC50-Assay  & EC50-Scaffold & EC50-Size   \\ \midrule
w/ weighted & \textbf{79.36±0.63}  & \textbf{68.12±0.48}    & \textbf{66.37±0.64}  \\
w/ averaging & 78.03±2.24  & 67.45±0.32    & 66.13±0.38  \\ 
w/ single & 77.46±0.29  & 67.68±0.17    & 66.02±0.41  \\  \bottomrule
\end{tabular}
}
\label{tab:abstudy_weighted}
\vspace{-0.2in}
\end{table}

\subsection{Ablation Study (RQ3)}

In this section, we conduct two ablation study to further validate our design choice in EQuAD.

\textbf{Adaptive Reweighting vs. Averaging}. We perform ablation study to investigate the effectiveness of using multiple prediction logits matrices from $\mathcal{S}$ compared to a single logits matrix. Additionally, we explore the benefits of model-specific adaptive reweighting over simple averaging. As reported in Table \ref{tab:abstudy_weighted}, employing multiple logits matrices consistently outperform the use of a single logits matrix, which may be due to the different spurious patterns learned by distinct logits matrices by $g(\cdot)$, enabling a more accurate estimation of the correlation degree between $S$ and label $Y$. Furthermore, leveraging feedback based on validation metrics allows for a preference towards logits matrices of higher quality, thereby yielding improved empirical results.

\textbf{The Necessity of Training an Encoder from Scratch.}  As observed in Figure \ref{fig:ssl_testAcc}, embeddings derived from infomax-based SSL struggles to extract high-quality invariant features and results in low test accuracy, even trained with decorrelation loss. In contrast, training an encoder $f(\cdot)$ from scratch can extract higher-quality invariant representations and lead to better graph invariance learning. This again indicates that infomax-based SSL primarily learns spurious features, and highlights the effectiveness of our proposed learning paradigm, which learns both invariant features and spurious features via ERM~\cite{kirichenko2023layer}, and remove spurious features in step 3 which are learned from SSL.

\section{Conclusions}

We have shown that the infomax principle effectively extracts spurious features with provable guarantees, which motivates us to design a novel learning paradigm and the development of a flexible framework, \ours. \ours induces a robust inductive bias by eliminating the reliance on strong assumptions about the correlation strengths between spurious features and class labels. Our approach significantly outperforms state-of-the-art methods in most synthetic and real-world datasets, demonstrating the effectiveness and robustness of \ours for OOD generalization on graphs. This learning paradigm holds great potential for adaptation to other data modalities, such as vision and natural language, which we leave for our future work.

\section*{Impact Statement}

This paper is dedicated to advancing research in OOD Generalization, with a specific focus on graph data. The learning paradigm we propose holds the potential for broad application beyond graph data, extending to other data forms such as vision and natural language.

In practical applications, distribution shifts between testing and training data are inevitable, and traditional graph machine learning approaches often suffer from significant performance degradation under such shifts. Developing methods with robust OOD generalization capabilities is thus critically important, especially for high-stakes graph applications, including but not limited to molecule prediction~\cite{hu2021open}, financial analysis~\cite{ijcai2020p643}, criminal justice~\cite{agarwal2021unified}, autonomous driving~\cite{liang2020enhancing}, and drug discovery~\cite{gaudelet2021utilizing}. The ability to maintain performance consistency across varying distributions can have profound implications in these areas, potentially leading to more reliable predictions and analyses that can inform decision-making processes and contribute to advancements in these fields. Besides, this paper does not raise any ethical concern or potentially harmful insight.

\section*{Acknowledgements}

KZ would like to acknowledge the support from NSF Grant 2229881, the National Institutes of Health (NIH) under Contract R01HL159805, and grants from Apple Inc., KDDI Research Inc., Quris AI, and Florin Court Capital. We sincerely appreciate the insightful discussions with the anonymous reviewers, which greatly helped to improve our work.

\bibliography{example_paper}
\bibliographystyle{icml2024}

%%%%%%%%%%%%%%%%%%%%%%%%%%%%%%%%%%%%%%%%%%%%%%%%%%%%%%%%%%%%%%%%%%%%%%%%%%%%%%%
%%%%%%%%%%%%%%%%%%%%%%%%%%%%%%%%%%%%%%%%%%%%%%%%%%%%%%%%%%%%%%%%%%%%%%%%%%%%%%%
% APPENDIX
%%%%%%%%%%%%%%%%%%%%%%%%%%%%%%%%%%%%%%%%%%%%%%%%%%%%%%%%%%%%%%%%%%%%%%%%%%%%%%%
%%%%%%%%%%%%%%%%%%%%%%%%%%%%%%%%%%%%%%%%%%%%%%%%%%%%%%%%%%%%%%%%%%%%%%%%%%%%%%%
\newpage
\appendix
\onecolumn

\section{Notations}
\label{app:notations}
We present a set of notations used throughout our paper for clarity. Below are the main notations along with their definitions.

\begin{table*}[ht]

\centering
\caption{Notation Table}
\begin{tabular}{c|l}
\hline
\textbf{Symbols} & \textbf{Definitions} \\
\hline
$\mathcal{G}$ & a set of graphs \\
$G=(A, X)$ & a graph with the adjacency matrix $A \in\{0,1\}^{n \times n}$ and node feature matrix $X \in \mathbb{R}^{n \times d}$ \\
$Y$ & random variable for labels \\
$C$ & content factor \\
$S$ & style factor \\
$E$ & environment \\
$\mathbf{h}_c$ & invariant representations, interchangeably with $C$ in our paper \\
$\mathbf{h}_s$ & spurious representations, interchangeably with $S$ in our paper \\
$G_c$ & the invariant subgraph with respect to $G$  \\
$G_s$ & the spurious subgraph with respect to $G$ \\ \hline
$\widehat{G}_c$ & the estimated invariant subgraph \\
$\widehat{G}_s$ & the estimated spurious subgraph \\
$\widehat{\mathbf{h}}_G$ & the estimated graph representation for graph $G$ \\
$\widehat{\mathbf{h}}_i$ & the estimated node representation for node $i \in G$ \\
$\widehat{\mathbf{h}}_c$ & the estimated invariant graph representation, interchangeably with $\widehat{C}$ in our paper \\
$\widehat{\mathbf{h}}_s$ & the estimated spurious graph representation, interchangeably with $\widehat{S}$ in our paper \\
$\widehat{C}$ & the estimated content variable  \\
$\widehat{S}$ &  the estimated style variable  \\ \hline
$\mathbf{H}$ &  latent representations derived from global-local mutual information maximization  \\
$\mathbf{S}$ &  prediction logits matrix derived from $\mathbf{H}$ and $Y$ \\
$\mathcal{H}$ &  a set of $\mathbf{H}$  \\
$\mathcal{S}$ &  a set of $\mathbf{S}$  \\ \hline 
$g(\cdot)$ &  classifier used in step 2 to generate spurious logits matrix $\mathbf{S}$ \\
$h(\cdot)$ &  encoder  \\
$\rho(\cdot)$ &  classifier  \\ \hline
$[K]:=\left\{1,2,\cdots,K\right\}$ &  index set with $K$ elements  \\
$w$ &  a scalar value  \\
$\mathbf{w}$ &  a vector  \\
$\mathbf{W}$ &  a matrix  \\
\hline
\end{tabular}
\label{tab:notation}
\end{table*}

\section{More Details on Data Generating Process}
\label{app:data_gen}

We detail about the underlying assumption of the data generating process in our work as illustrated in Figure~\ref{fig:scm}. The graph generation process is illustrated using a latent-variable model perspective to elucidate, positing that a graph is generated through a mapping $f_{\text{gen}}: \mathcal{Z} \rightarrow \mathcal{G}$, where $\mathcal{Z} \subseteq \mathbb{R}^n$ represents the latent space and $\mathcal{G}$ denotes the graph space. Within this framework, we distinguish the latent variables into an content variable $C \in \mathcal{C}$ and a style variable $S \in \mathcal{S}$, based on their susceptibility to environmental influences $E$. The invariant and spurious components, $C$ and $S$, respectively, govern the observed graphs' generation, with their interactions in the latent space leading to the emergence of Fully Informative Invariant Features (FIIF) and Partially Informative Invariant Features (PIIF), depends on the completeness of information $C$ provides about the label $Y$.

The graph generation model is formalized through a Structural Causal Model (SCM) that decomposes $f_{\text{gen}}$ into distinct functions controlling the generation of $G_c$, $G_s$, and $G$, as outlined in Eqn.~\ref{eq:scm}. This decomposition allows for the isolation of invariant information within $G_c$, unaffected by environmental interventions, from the spurious and environment-sensitive information within $G_s$ and $G$. Such a model reflects the reality that graphs from different domains may exhibit diverse structural and feature-level properties, all potentially spuriously correlated with labels depending on the underlying latent interactions. The SCM framework, compatible with a broad range of graph generation models, aims to characterize potential distribution shifts without committing to specific graph families, thereby maintaining generality and applicability across various contexts. 

\begin{equation}
\label{eq:scm}
G_c:=f_{\text {gen }}^{G_c}(C), \quad G_s:=f_{\text {gen }}^{G_s}(S), \quad G:=f_{\text {gen }}^G\left(G_c, G_s\right)
\end{equation}

Following previous work~\cite{arjovsky2020invariant,ahuja2022invariance,chen2022learning}, the latent interactions between $C$ and $S$ are categorized into Fully Informative Invariant Features (FIIF) and Partially Informative Invariant Features (PIIF) based on whether $C$ fully informs about $Y$. In the context of FIIF (Eqn.~\ref{fiif}), the invariant component $C$ provides a complete and direct mapping to the label $Y$, making it fully informative for label prediction. This implies that, regardless of environmental variations or the presence of spurious correlations, a model can rely solely on the invariant features within $C$ for accurate predictions. This scenario is ideal for OOD generalization, as it suggests that the core attributes necessary for classification are consistent across different domains. Previous OOD methods, such as GIB~\cite{yu2020graph} and DIR~\cite{wu2022discovering} and GSAT~\cite{miao2022interpretable}, have primarily focused on leveraging FIIF to ensure that models remain robust against distribution shifts by concentrating on invariant features that are not affected by environmental changes. 

\begin{equation}
\label{fiif}
Y:=f_{\mathrm{inv}}(C), S:=f_{\mathrm{spu}}(C, E), G:=f_{\mathrm{gen}}(C, S) .
\end{equation}

PIIF (Eqn.~\ref{piif}), on the other hand, acknowledge that $C$ only partially informs about $Y$. This necessitates the consideration of additional variables, potentially including the spurious component $S$ or the environment $E$, to achieve accurate label prediction. In these cases, the model must discern how to integrate information from both invariant and spurious features, navigating the complex interactions that may indirectly influence the prediction of $Y$. The challenge here lies in identifying and mitigating the impact of spurious correlations that may be informative in a specific context but detrimental to generalization across environments. Methods like IRM~\cite{arjovsky2020invariant} focus on PIIF scenarios by attempting to learn representations that are predictive of $Y$ across different environments, despite the presence of varying spurious correlations.

\begin{equation}
\label{piif}
Y:=f_{\mathrm{inv}}(C), S:=f_{\mathrm{spu}}(Y, E), G:=f_{\mathrm{gen}}(C, S) .
\end{equation}

In this study, we provide a new perspective on how to learn graph invariance under both of these scenarios, and our proposed framework exhibits robust and superior performance under both scenarios with varying degrees of correlation strengths between $S$ and $Y$.

\section{More Background and Related Work}

\textbf{Graph Neural Networks.} In this work, we adopt message-passing GNNs for graph classification due to their expressiveness. Given a simple and undirected graph $G=(A,X)$ with $n$ nodes and $m$ edges, where $A \in\{0,1\}^{n \times n}$ is the adjacency matrix, and $X \in \mathbb{R}^{n \times d}$ is the node feature matrix with $d$ feature dimensions, the graph encoder $h: \mathbb{G} \rightarrow \mathbb{R}^h$ aims to learn a meaningful graph-level representation $h_G$, and the classifier $\rho: \mathbb{R}^h \rightarrow \mathbb{Y}$ is used to predict the graph label $\widehat{Y}_G=\rho (h_G)$. To obtain the graph representation $h_G$, the representation  \( \mathbf{h}_{v}^{(l)} \) of each node \( v \) in a graph $G$ is iteratively updated by aggregating information from its neighbors \( \mathcal{N}(v) \). For the \( l \)-th layer, the updated representation is obtained via an AGGREGATE operation followed by an UPDATE operation:

\begin{align}
\mathbf{m}_{v}^{(l)} &= \text{AGGREGATE}^{(l)}\left(\left\{\mathbf{h}_{u}^{(l-1)}: u \in \mathcal{N}(v)\right\}\right), \\
\mathbf{h}_{v}^{(l)} &= \text{UPDATE}^{(l)}\left(\mathbf{h}_{v}^{(l-1)}, \mathbf{m}_{v}^{(l)}\right),
\end{align}
where \( \mathbf{h}_{v}^{(0)} = \mathbf{x}_{v} \) is the initial node feature of node \( v \) in graph \( G \). Then GNNs employ a READOUT function to aggregate the final layer node features \( \left\{\mathbf{h}_{v}^{(L)} : v \in \mathcal{V}\right\} \) into a graph-level representation \( \mathbf{h}_{G} \):
\begin{equation}
\label{graph-rep Eq}
\mathbf{h}_{G} = \operatorname{READOUT}\left(\left\{\mathbf{h}_{v}^{(L)}: v \in \mathcal{V}\right\}\right).
\end{equation}

In this work, we adopt GIN~\cite{xu2019powerful} as backbone encoder, which is 1-WL~\cite{weisfeiler1968reduction} expressive. For more expressive GNN architectures, such as subgraph-based GNNs~\cite{zhao2021stars,zhang2021nested,you2021identity,maron2020learning} and $K$-hop message-passing GNNs~\cite{feng2022powerful,yao2023improving,nikolentzos2020k} as encoders, the OOD generalizability may further enhances as the expressive power increases.

\noindent{\bf Central Moment Discrepancy.} Central Moment Discrepancy (CMD) is a probalistic metric for domain-invariant representation learning, particularly used in the context of domain adaptation with neural networks. The main goal of CMD is to minimize the discrepancy between domain-specific latent feature representations directly in the hidden activation space. Unlike traditional methods such as Maximum Mean Discrepancy (MMD), CMD explicitly aligns higher-order moments of probability distributions in an order-wise manner. The CMD method utilizes central moments to quantify the differences between probability distributions. Let $P$ and $Q$ be two probability distributions with their respective central moments $\mu_k(P)$ and $\mu_k(Q)$ for order $k$. The CMD between these distributions can be defined as: $\operatorname{CMD}(P, Q)=\sum_{k=1}^K\left|\mu_k(P)-\mu_k(Q)\right|$, where $K$ is the highest order of moments considered. The central moments $\mu_k$ are defined as: $\mu_k(P)=\mathbb{E}_{x \sim P}\left[\left(x-\mathbb{E}_{x \sim P}[x]\right)^k\right]$.

\noindent{\bf OOD Generalization.} In the field of machine learning, especially in OOD settings, deep neural networks are known to exploit spurious features, leading to failures in generalization. Recently, there is an emerging line of work proposed to address this challenge. IRM~\cite{arjovsky2020invariant} aims to learn an optimal invariant association across diverse environment segments within the training data. Extending this concept, some studies have integrated multi-objective optimization with game theory~\cite{ahuja2020invariant,chen2022learning} or adopted invariant representation learning through adversarial training using deep neural networks~\cite{chang2020invariant,xu2021learning}; In parallel, Distributionally Robust Optimization (DRO)~\cite{BenTal2011RobustSO,lee2018minimax,gao2020wasserstein,duchi2020learning} methods have been developed to enhance OOD generalization. These methods focus on training models to perform robustly against the worst-case loss among diverse data groups; However, both IRM and DRO typically necessitate explicit environment partitions within the dataset, a requirement that is often impractical in real-world scenarios. This limitation has motivated research into invariance learning where such explicit environment partitions are not available. Environment Inference for Invariant Learning (EIIL)~\cite{creager2021environment} employs a two-stage training process involving biased model environment inference and subsequent invariance learning. Similarly, Heterogeneous Risk Minimization (HRM)~\cite{liu2021heterogeneous} addresses this issue by simultaneously learning latent heterogeneity and invariance. However, most of these methods focus on Euclidean data, and cannot be trivially adapted to graph-structured data. 

\noindent{\bf Graph Invariance Learning.} In recent years, there has been an increasing focus on learning graph-level representations that are robust to distribution shifts, particularly from the perspective of invariant learning. This growing interest has led to the development of two lines of research in graph invariance learning algorithms. The first line of research involves environment inference~\cite{yang2022learning,li2022learning} or environment augmentation~\cite{wu2022discovering,Liu_2022} algorithms, which infer environmental labels, or perform environment augmentation, and then use this information to learn graph invariant features. Another line of work do not explicitly address the issue of unobserved environment labels. Instead, these methods adopt alternative strategies to achieve invariant learning. For instance, CIGA~\cite{chen2022learning} utilizes contrastive learning within the same class labels, with the underlying assumption that samples with the same label share invariant substructures; DisC~\cite{fan2022debiasing}, conversely, leverages biased information to initially learn a biased graph, subsequently focusing on unbiased graphs for learning invariant features. However, these methods often rely on strong assumptions about the joint distribution $\mathbb{P}(S,Y)$, which can lead to potential failures in real-world scenarios. For example, DisC assume a strong correlation between $S$ and $Y$ to identify biased graph. While CIGA's ability to identify invariant subgraphs is contingent on a stronger correlation between $C$ and $Y$. Similarly, environment inference and augmentation algorithms typically assume a weak correlation between $S$ and $Y$; otherwise, $C$ might be erroneously included in $\widehat{\mathbf{h}}_s$, leading to the failure of environment inference or augmentation.

In this work, we adopt a self-supervised learning approach based on the infomax principle to extract spurious features with provable guarantee. This method alleviates the dependence on the label $Y$, and reduces the reliance on the correlation between $S$ and $Y$, establishing a new inductive bias that remains robust under varying degrees of correlation in $\mathbb{P}(S,Y)$. 

\noindent{\bf Identifiability in Self-Supervised Learning.} Self-supervised learning with augmentations has gained huge success in learning useful graph representations~\citep{dgi,graphcl}.
Existing analysis of self-supervised learning focuses on showing the desired property such as identifying the content from style~\citep{ssl_sep}, or invariant subgraph from spurious one~\citep{chen2022learning,chen2023does,mole_identify}. In contrast, we show that infomax principle tends to learn the spurious features under suitable conditions. 

\section{Proofs for Theorems and Propositions}

\subsection{Proof for Theorem~\ref{thm:spu_infomax}}\label{proof:spu_infomax}
\begin{theorem}\label{thm:spu_infomax_appdx}[Restatement of Theorem~\ref{thm:spu_infomax}]
    Given the same data generation process as in Fig.~\ref{fig:scm} with $H(S)=H(C)=\delta_f$, assuming the node representations encode proper information of the underlying latent factors (i.e., $\delta_r\geq I(\widehat{\vh}_{i};C)- I(\widehat{\vh}_{i};S)\geq \delta_l, \forall i\in G_c$ and $\delta_r\geq I(\widehat{\vh}_{i};S)- I(\widehat{\vh}_{i};C)\geq \delta_l, \forall i\in G_s$),  the graph representation $\widehat{\vh}_G \in \mathbb{R}^k$ have sufficient capacity to encode $k$ independent features $\{\widehat{\vh}_G[j]\}_{j=1}^k$ with $H(\widehat{\vh}_G[j])\leq \delta_f$, then, if $|G_s|/|G_c|> \delta_r/\delta_l$, then the graph representation elicited by the infomax principle~(Eqn.~\ref{infomax_eq}) exclusively contain spurious features $S$, i.e.,

    $$
    \mathbf{h}_s=\argmax _\theta \frac{1}{|G||\mathcal{G}|} \sum_{G \in \mathcal{G}} \sum_{i \in|G|} I\left(\widehat{\mathbf{h}}_i; \widehat{\mathbf{h}}_G\right).
    $$
\end{theorem}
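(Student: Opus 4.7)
The plan is to exploit the additive structure of the infomax objective over nodes, combined with the per-slot capacity cap on the graph representation $\widehat{\mathbf{h}}_G$, to show that every feature slot strictly prefers to encode $S$ rather than $C$ under the given imbalance hypothesis. First I would decompose the inner sum by the partition of the graph into the invariant subgraph and the spurious subgraph,
$$\sum_{i \in |G|} I(\widehat{\mathbf{h}}_i; \widehat{\mathbf{h}}_G) = \sum_{i \in G_c} I(\widehat{\mathbf{h}}_i; \widehat{\mathbf{h}}_G) + \sum_{i \in G_s} I(\widehat{\mathbf{h}}_i; \widehat{\mathbf{h}}_G),$$
and then decompose along the $k$ independent feature coordinates of $\widehat{\mathbf{h}}_G$. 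Since the coordinates $\{\widehat{\mathbf{h}}_G[j]\}_{j=1}^k$ are independent, summing MI across coordinates turns the problem into an analysis of each slot $\widehat{\mathbf{h}}_G[j]$ separately, at which point maximization over the encoder reduces to $k$ identical per-slot subproblems.

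For any fixed coordinate $j$, the entropy bound $H(\widehat{\mathbf{h}}_G[j]) \le \delta_f = H(C) = H(S)$ says that the slot has just enough capacity to encode either $C$ or $S$ in full, but not both. Using the data processing inequality along the factorization $\widehat{\mathbf{h}}_G = (f_{\text{enc}} \circ f_{\text{gen}})(C,S)$, I would argue that the maximal per-slot contribution is attained at one of two extremes: $\widehat{\mathbf{h}}_G[j]$ is an invertible encoding of $C$, giving $\sum_i I(\widehat{\mathbf{h}}_i; \widehat{\mathbf{h}}_G[j]) = \sum_i I(\widehat{\mathbf{h}}_i; C)$, or an invertible encoding of $S$, giving $\sum_i I(\widehat{\mathbf{h}}_i; S)$. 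Applying the per-node bounds from the hypothesis, the difference between these two candidate values satisfies
$$\sum_{i \in G_c \cup G_s}\!\bigl[I(\widehat{\mathbf{h}}_i; S) - I(\widehat{\mathbf{h}}_i; C)\bigr] \;\ge\; -|G_c|\,\delta_r + |G_s|\,\delta_l,$$
which is strictly positive precisely when $|G_s|/|G_c| > \delta_r/\delta_l$. Thus every slot strictly prefers $S$ at the optimum, and the maximizer of Eqn.~\ref{infomax_eq} satisfies $\widehat{\mathbf{h}}_G = \mathbf{h}_s$ as claimed.

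The hard part will be justifying rigorously that a slot with capacity $\delta_f$ must optimally collapse to a pure encoding of $C$ or a pure encoding of $S$, rather than a blended function that partially captures both. I would handle this by treating $\widehat{\mathbf{h}}_G[j]$ as a (possibly stochastic) map from $(C,S)$ through $G$, using the capacity cap to bound $I(\widehat{\mathbf{h}}_G[j];(C,S)) \le \delta_f$, and then ruling out nontrivial mixtures such as $C \oplus S$ by an extreme-point argument: any map whose image depends on both factors spreads its finite capacity across them and strictly reduces the MI with each, so the maximizer of the per-node MI sum concentrates its capacity on whichever factor dominates the aggregated inequality above. A secondary subtlety is that the additive decomposition across feature coordinates holds with equality only when the coordinates are conditionally independent given each $\widehat{\mathbf{h}}_i$; if needed, I would fall back on the weaker bound $I(\widehat{\mathbf{h}}_i; \widehat{\mathbf{h}}_G) \le \sum_j I(\widehat{\mathbf{h}}_i; \widehat{\mathbf{h}}_G[j])$ and verify that the slot-wise comparison still forces the overall optimum to encode only $S$.
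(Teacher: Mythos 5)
Your proposal is correct and follows essentially the same route as the paper's proof: decompose the infomax objective over the node partition $G_c \cup G_s$ and over the $k$ independent feature slots, compare the contribution of a slot encoding $C$ against one encoding $S$, and conclude from $|G_s|\delta_l - |G_c|\delta_r > 0$ that every slot prefers $S$ (the paper phrases this as an exchange argument by contradiction, switching one $C$-slot to $S$, but the computation is identical). The two subtleties you flag at the end --- ruling out blended slots and the exact additivity of MI across coordinates --- are in fact assumed away rather than proved in the paper, so your treatment is, if anything, slightly more candid about where the argument relies on the stated feature-slot structure.
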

\begin{proof}
    Our proof is established by contradiction. To begin with, without loss of generality, let us assume that in the final solution $\widehat{\vh}_G$, there exist some features $i_c\in I_C\subseteq\{1,...,k\}$ in $\widehat{\vh}_G$ exclusively encoding information of $C$, such that,
    \[
        I(\widehat{\vh}_{i_c};C)=H(C), %>I(\widehat{\vh}_{i_c};S),
    \]
    where we denote $i_c$-th feature of $\widehat{\vh}_G$ as $\widehat{\vh}_{i_c}$ for the clarity of notation. Correspondingly, we could also find the complementary dimensions as $i_s\in I_S$ such that
    \[
    I(\widehat{\vh}_{i_s};S)=H(S). %>I(\widehat{\vh}_{i_s};C).
    \]
    
    Consider a single graph $G$, to solve $\max_\theta \; I\big(\widehat{\mathbf{h}}_i, \widehat{\mathbf{h}}_G\big)$ (Eqn.~\ref{infomax_eq}), we can expand $\widehat{\vh}_G$ and write out the mutual information with respect to each node as follows: 
    
    \begin{equation}
    \begin{aligned}
        I_\text{DI} &= I(\widehat{\vh}_i;\cup_{i_c\in I_C}\widehat{\vh}_{i_c},\cup_{i_s\in I_S}\widehat{\vh}_{i_s})\\
        &=\sum_{i\in G}(\sum_{i_c\in I_C}I(\widehat{\vh}_i;\widehat{\vh}_{i_c})+\sum_{i_s\in I_S}I(\widehat{\vh}_i;\widehat{\vh}_{i_s}))\\
        &=\sum_{i\in G_c}(\sum_{i_c\in I_C}I(\widehat{\vh}_i;\widehat{\vh}_{i_c})+\sum_{i_s\in I_S}I(\widehat{\vh}_i;\widehat{\vh}_{i_s}))+\sum_{i\in G_s}(\sum_{i_c\in I_C}I(\widehat{\vh}_i;\widehat{\vh}_{i_c})+\sum_{i_s\in I_S}I(\widehat{\vh}_i;\widehat{\vh}_{i_s}))\\
        &=\sum_{i\in G_c}(\sum_{i_c\in I_C}I(\widehat{\vh}_i;C)+\sum_{i_s\in I_S}I(\widehat{\vh}_i;S))+\sum_{i\in G_s}(\sum_{i_c\in I_C}I(\widehat{\vh}_i;C)+\sum_{i_s\in I_S}I(\widehat{\vh}_i;S))\\
    \end{aligned}
    \end{equation}
    Then, considering switching an index $i_c'\in C$ to encode $S$, we will have the information changes to $I_\text{DI}$ as the following
    \begin{equation}
    \begin{aligned}
        \Delta I_\text{DI} 
        &=\sum_{i\in G_c}(-I(\widehat{\vh}_i;C)+I(\widehat{\vh}_i;S))+\sum_{i\in G_s}(-I(\widehat{\vh}_i;C)+I(\widehat{\vh}_i;S))\\
        &=-\sum_{i\in G_c}(I(\widehat{\vh}_i;C)-I(\widehat{\vh}_i;S))+\sum_{i\in G_s}(I(\widehat{\vh}_i;S)-I(\widehat{\vh}_i;C)),\\
    \end{aligned}
    \end{equation}
    where we can upper bound the first item 
    \[
    \sum_{i\in G_c}(I(\widehat{\vh}_i;C)-I(\widehat{\vh}_i;S))\leq |G_c|(I(\widehat{\vh}_i;C)-I(\widehat{\vh}_i;S))=|G_c|\delta_r,
    \]
    and lower bound the second item
    \[
    \sum_{i\in G_s}(I(\widehat{\vh}_i;S)-I(\widehat{\vh}_i;C))\geq |G_s|(I(\widehat{\vh}_i;S)-I(\widehat{\vh}_i;C))=|G_s|\delta_l.
    \]
    Then, it suffices to know that 
    \[
    \Delta I_\text{DI}\geq |G_s|\delta_l-|G_c|\delta_r>0,
    \]
    and hence switching any node $i_c\in I_C$ to one for $S$ increases the infomax objective. Therefore, we conclude that $\vh_s$ maximizes the objective of the infomax principle. We conclude the proof for Theorem~\ref{thm:spu_infomax}.
\end{proof}

\subsection{Proof for Theorem \ref{theorem:inv_eq}}
\label{proof:theo_inv_eq}
To prove Theorem \ref{theorem:inv_eq}, we show both necessity and sufficiency that Eqn. \ref{inv_eq1}, serving as the penalty term for $\mathcal{L}_{GT}$, will encourage the decorrelation of $\widehat{\mathbf{h}}_c$ and $\mathbf{h}_s$, and elicit invariant representations $\mathbf{h}_c$. First we propose a proposition to aid the proof. 

\begin{proposition}
\label{prop:bins_max_ent}

 Let $s_{i} \in \mathbb{R}_+$ and $s_{j} \in \mathbb{R}_+$ denote the empirical random variable for the logits values of sample that belongs to $B_i$ and $B_j$ respectively. For any two bins $B_i$ and $B_j$ with equal size samples (denoted by $K$), and located symmetrically around $0.5$, i.e., $|s_i|=|s_j|$, $d\left(\mathbb{E}\left[s_{i y}\right], 0.5\right)=d\left(\mathbb{E}\left[s_{jy}\right], 0.5\right)=\epsilon$, and where $sign(\mathbb{E}[s_{i}]-0.5)=sign(0.5-\mathbb{E} [s_{j}])$. The optimal and unique solution for $\hat{s}$ of the following cross-entropy function $l(\cdot)$ is $0.5$:

\begin{equation}
\begin{aligned}
l(\widehat{s}) & =K\left(-\mathbb{E}\left[s_i\right] \log \widehat{s}-\mathbb{E}\left[1-s_i\right] \log (1-\widehat{s})\right)+K\left(-\mathbb{E}\left[s_j\right] \log \widehat{s}-\mathbb{E}\left[1-s_j\right] \log (1-\widehat{s})\right) \\
& =K(-(0.5+\varepsilon) \log \widehat{s}-(1-0.5-\varepsilon) \log (1-\widehat{s}))+K(-(0.5-\varepsilon) \log \widehat{s}-(1-0.5+\varepsilon) \log (1-\widehat{s}))
\end{aligned}
\end{equation}

\end{proposition}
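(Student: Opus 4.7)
The plan is to exploit the symmetry built into the hypothesis: the two bins contribute coefficient pairs $(0.5+\varepsilon, 0.5-\varepsilon)$ and $(0.5-\varepsilon, 0.5+\varepsilon)$ in front of $\log\widehat{s}$ and $\log(1-\widehat{s})$ respectively, and these pairs sum to a constant. So the strategy is just: collect terms, observe that $\varepsilon$ cancels, and then minimize a one-dimensional strictly convex function on $(0,1)$.

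Concretely, I would first expand $l(\widehat{s})$ and group the coefficients of $\log\widehat{s}$ and of $\log(1-\widehat{s})$ separately. The coefficient of $-\log\widehat{s}$ becomes $K\bigl[(0.5+\varepsilon)+(0.5-\varepsilon)\bigr] = K$, and similarly the coefficient of $-\log(1-\widehat{s})$ becomes $K$. Hence
\begin{equation*}
l(\widehat{s}) \;=\; -K\log \widehat{s} \;-\; K\log(1-\widehat{s}),
\end{equation*}
which no longer depends on $\varepsilon$.

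Next, I would take the derivative, set it to zero, and solve. This yields $-K/\widehat{s} + K/(1-\widehat{s}) = 0$, i.e.\ $\widehat{s} = 1/2$. To confirm this is the unique minimizer (rather than a saddle or maximum), I would compute the second derivative $K/\widehat{s}^{2} + K/(1-\widehat{s})^{2}$, which is strictly positive on $(0,1)$, so $l$ is strictly convex and the critical point is the unique global minimum on the open interval. Boundary behaviour $l(\widehat{s}) \to +\infty$ as $\widehat{s}\to 0^{+}$ or $\widehat{s}\to 1^{-}$ rules out degenerate optima at the endpoints.

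There is no real obstacle here; the proposition is essentially a one-line convexity argument once the symmetry is used. The only subtlety worth flagging is that the hypothesis $\operatorname{sign}(\mathbb{E}[s_i]-0.5) = \operatorname{sign}(0.5-\mathbb{E}[s_j])$ together with $d(\mathbb{E}[s_i],0.5)=d(\mathbb{E}[s_j],0.5)=\varepsilon$ is precisely what makes the $\varepsilon$ contributions in the two bin terms cancel rather than add; without this sign condition the coefficients of $\log\widehat{s}$ would be $(0.5+\varepsilon)+(0.5+\varepsilon)=1+2\varepsilon$ and the minimizer would shift away from $0.5$. I would state this cancellation explicitly in the write-up so the role of each hypothesis is transparent, and then cite the resulting $\widehat{s}=0.5$ as the bin-level identity that feeds into the proof of Theorem~\ref{theorem:inv_eq}.
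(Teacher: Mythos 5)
Your proposal is correct and follows essentially the same route as the paper's proof: collect terms so the $\varepsilon$ contributions cancel, reduce to $l(\widehat{s})=-K\log\widehat{s}-K\log(1-\widehat{s})$, and solve the first-order condition to get $\widehat{s}=0.5$. Your additional second-derivative and boundary checks make the uniqueness claim slightly more explicit than the paper's version, but the argument is the same.
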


\begin{proof}
Simplifying $l(\widehat{s})$, we get:

$$
l(\widehat{s})=-K \log (\widehat{s})-K \log (1-\widehat{s}).
$$

Calculating $\frac{d}{d \hat{s}} l(\hat{s})$, we get:

\begin{equation}
\begin{aligned}
    \frac{d}{d \widehat{s}} f(\widehat{s}) & = \frac{d}{d \widehat{s}}(-K \log (\widehat{s})-K \log (1-\widehat{s})) \\
& = -K \cdot\left(\frac{1}{\hat{s}}-\frac{1}{1-\hat{s}}\right) \\
& = \frac{K(1-2 \widehat{s})}{\widehat{s}(1-\widehat{s})}.
\end{aligned}
\end{equation}

Setting $\frac{K(1-2 \widehat{s})}{\widehat{s}(1-\widehat{s})}=0$, we get $\hat{s}=0.5$. We conclude the proof.

\end{proof}

With Prop. \ref{prop:bins_max_ent}, we first prove the necessity, i.e., given $\widehat{\mathbf{h}}_c=\mathbf{h}_c$, Eqn. \ref{inv_eq1} satisfies $\max \; H\left(\mathbf{h}_s \mid \widehat{\mathbf{h}}_c\right)$.

\begin{proof}
Given $\widehat{\mathbf{h}}_c=\mathbf{h}_c$, and under the same class label $Y=y$, all samples that belong to class $y$ are encoded into $\mathbf{h}_c$, as $C$ is causally related to $Y$. Under the assumption of Theorem $\ref{theorem:inv_eq}$, we have $K$ pairs of symmetric bins for any label $Y$. Using Prop. \ref{prop:bins_max_ent}, we know that for any pair of symmetrical bins $B_i$ and $B_j$, the optimal solution for the estimated logits $\widehat{s_y}=\rho(\mathbf{h}_c) \in \mathbb{R}^+$ is $0.5$, which maximizes the prediction entropy. As the $K$ pairs of bins simultaneously achieve the same optimal solution, and $l(\cdot)$ is a convex function, $\widehat{s_y}=0.5$ is both optimal and unique. We conclude the proof.
\end{proof}

The necessity condition shows that $\widehat{\mathbf{h}}_c=\mathbf{h}_c$ is indeed one of the feasible solutions for Eqn. \ref{inv_eq1}, which decorrelates $\mathbf{h}_s$ and $\widehat{\mathbf{h}}_c$. We remain to show that $\widehat{\mathbf{h}}_c=\mathbf{h}_c$ is the unique solution for Eqn. \ref{inv_eq1}, when we optimize $\mathcal{L}=\mathcal{L}_{G T}+\lambda \mathcal{L}_{I n v}$, where Eqn. \ref{inv_eq1} serves as a penalty term $\mathcal{L}_{Inv}$. To demonstrate the sufficiency, we will divide our analysis into three distinct scenarios. In each scenario, we will compare and discuss the empirical risks associated with $\mathcal{L}_{GT}$ and $\mathcal{L}_{Inv}$.

\begin{proof}
The empirical risk for $l(\cdot)$ is defined as: $\hat{R}_n(l) \stackrel{\text { def }}{=} \frac{1}{n} \sum_{i=1}^n \mathbb{I}\left(l\left(X_i\right) \neq Y_i\right)$. We consider three scenarios: 1) The solution only contains $C$, i.e., $\widehat{\mathbf{h}}_c=\mathbf{h}_c$, which we have proved that it is one of the feasible solution. 2) The solution only contains $S$, i.e., $\widehat{\mathbf{h}}_c=\mathbf{h}_s$. 3) The solution contains a mixing ratio of $C$ and $S$. We need to show that for case 1), the empirical risk for $\mathcal{L}_{G T}+\lambda \mathcal{L}_{Inv}$ is the smallest. We consider the first two cases. For any pair of symmetrical bins with equal sample size $n$ ($2n$ in total):

\begin{itemize}
    \item For case 1, $\hat{R}_n(\mathcal{L}_{GT})$ would be $0$ as  $\widehat{\mathbf{h}}_c=\mathbf{h}_c$. $\hat{R}_n(\mathcal{L}_{Inv})=n$ as in this case the estimated probability $p=0.5$, according to Prop. \ref{prop:bins_max_ent}.
    \item For case 2, $\hat{R}_n(\mathcal{L}_{GT})=n$ and $\hat{R}_n(\mathcal{L}_{Inv})=0$ since we can encode different $\widehat{\mathbf{h}}_c$ for different samples to fit the spurious patterns. However, only half of them are correlated to $Y$.
\end{itemize}

From the above discussion, we conclude that when $\lambda<1.0$, the solution for $\mathcal{L}$ would prefer $\widehat{\mathbf{h}}_c=\mathbf{h}_c$, whereas when $\lambda>1.0$, the solution would prefer $\widehat{\mathbf{h}}_c=\mathbf{h}_s$. This is consistent with our experiment results in Section \ref{app:exp}: when $\lambda>1.0$, the model performance degrade dramatically.

Finally, we also need to rule out case 3. For simplicity, we assume that the encoder $h(\cdot) \sim \operatorname{Bernoulli}(r)$ is stochastic which follows a Bernoulli distribution with parameter $r$. Then in each bin, there are $r$ samples that are encoded using $C$ and $1-r$ encoded using $S$, then with a similar technique, we can derive that: $\hat{R}_n(\mathcal{L}_{GT})=(1-r)n$, and $\hat{R}_n(\mathcal{L}_{Inv})=rn$, which can be obtained as a convex combination of case 1 and case 2 for $\mathcal{L}_{GT}$ and $\mathcal{L}_{Inv}$ respectively. 
As for case 3, $\hat{R}_n(\mathcal{L}_{GT})=(1-r)n>0$ when $r<1$, for any value $r$, we can find a suitable $\lambda$ that make the empirical risk for case 1 to be smallest among the 3 scenarios. In other words, $\widehat{\mathbf{h}}_c=\mathbf{h}_c$ would be optimal and unique solution when $\mathcal{L}=\mathcal{L}_{G T}+\lambda \mathcal{L}_{Inv}$. We conclude the proof for Theorem \ref{theorem:inv_eq}.

\end{proof}

\section{More Discussions on \texttt{EQuAD} Framework}
\label{app:discuss_equad_framework}
\oursfull (\ours) is a flexible learning framework, where off-the-shelf algorithms can serve as plug-ins for specific implementations for each step. In the first two steps, \textit{Encoding} and \textit{Quantifying}, we adhere to infomax-based self-supervised learning and model-based quantification due to their effectiveness. For the third step, \textit{Decorrelation}, multiple options are available. We note that the learning objective in Eqn.~\ref{inv_eq2} demonstrates superior performance across multiple datasets. However, in cases of significant bias, this objective may encounter optimization challenges, despite its theoretical soundness. This issue often arises when $S$ and $Y$ are highly correlated, leading minibatch gradient descent to inadvertently reinforce gradients towards spurious patterns, as samples highly correlated with $Y$ predominate in each minibatch. A potential solution involves full-batch gradient descent with sample reweighting, but this approach is memory-intensive and impractical for large datasets. To overcome this limitation, we propose an alternative learning objective for decorrelation, drawing inspiration from DisC~\cite{fan2022debiasing}. This approach utilizes ERM and assigns greater weight to sample $i$ from class $Y=y$ with logits value $\widehat{s}_{iy}$ close to zero, indicating a lack of spurious patterns. By employing a reweighted cross-entropy loss, this method focuses on learning invariant features. The formulation of the alternative loss function $\mathcal{L}_{GT}$ is outlined as follows:

\begin{equation}
\label{erm_eq_alter}
\mathcal{L}_{GT}=w_i CE \big(y_i, \widehat{y_i}\big), \forall i \in [N], \text { s.t. } w_i=\frac{1-s_{iy}^{\gamma}}{\gamma},
\end{equation}

where $CE(\cdot,\cdot)$ is the cross-entropy loss. In Eqn.~\ref{erm_eq_alter}, each sample $i$ with ground-truth label $y$ get reweighted such that the model training can focus on samples exhibit weaker spurious correlation, thus implicitly achieve the optimality of Eqn.~\ref{max_opt_CS}. While DisC~\cite{fan2022debiasing} requires learning the biased graphs at first using generalized cross entropy (GCE)~\cite{zhang2018generalized} loss, EQuAD can directly obtain the spurious (biased) information from step 1 and 2, without presumptive assumptions on $\mathbb{P}(S,Y)$. We adopt Eqn~\ref{erm_eq_alter} for two-piece graph datasets without the need of $\mathcal{L}_{Inv}$, and observe better predictive performance and faster convergence speed. 

\section{More Details on Experiments about Representation Quality Analysis}
\label{app:rep_quality}

Recent studies have indicated that for Euclidean data, even when neural networks heavily rely on spurious features and perform poorly on minority groups where the spurious correlation is broken, they still learn the invariant, or core features sufficiently well \cite{kirichenko2023layer}. In our work, we conducted a similar experiment with the SPMotif datasets to investigate whether ERM has already learned effective invariant features. Specifically, we first trained an encoder $h(\cdot)$ on the training set using ERM, then froze this encoder and obtain representations for each sample. Subsequently, we added a 2-layer MLP on top of $h(\cdot)$ and re-trained this MLP for feature reweighting on half of the validation set, where the spurious correlation does not hold. If ERM truly learned causally-related features, then the feature reweighting via the validation set should be able to exploit the invariant features to achieve OOD generalization. The experiment results are illustrated in left part of Figure \ref{fig:emb_quality}. As we can see, for both SPMotif-0.60 and SPMotif-0.90, the test accuracy is already on par with or even better than many state-of-the-art invariance learning methods. This suggests that \textit{ERM has already been able to learn effective invariant features}, which is contrary to our objective. In contrast, when utilizing the representations learned from infomax-based SSL, the accuracy after feature reweighting exhibits a significant gap compared with ERM, indicating that the representations derived from infomax-based SSL predominantly contains spurious features.

\section{Algorithmic Pseudocode}

We provide the following pseudocode to facilitate a better understanding of our algorithm and framework. The code will be open-sourced upon acceptance of our work.

\label{appendix:pseudo-code}

\begin{algorithm}[H]
\caption{The \ours Framework}
\KwIn{graph dataset $\mathcal{G}$, set of epochs $\mathcal{P}$, number of classifiers $M$, total number of training epochs $P$ for decorrelation}
\KwOut{Model $f = \rho \circ h$, composed of encoder $h(\cdot)$ and classifier $\rho(\cdot)$}

\textbf{Step 1: Encoding}\;
Initialize encoder $h(\cdot)$\;
\For{each model architecture $i \in [K]$}{
    \For{each epoch $j \in \mathcal{P}$}{
        Train $h(\cdot)$ using Eqn.~\ref{infomax_eq}\;
        Generate latent representations $\mathbf{H}^{(i, j)}$\;
    }
}
Collect all representations $\mathcal{H} := \{\mathbf{H}^{(i, j)}\}$\;

\textbf{Step 2: Quantifying}\;
Initialize classifiers $g(\cdot)$\;
\For{each representation $\mathbf{H}^{(i, j)} \in \mathcal{H}$}{
    Train classifier $g(\cdot)$ using ERM, taking $\mathbf{H}^{(i, j)}$ and $Y$ as inputs\;
    Compute logits matrix $\mathbf{S}^{(i, j)} = g(\mathbf{H}^{(i, j)}, Y)$\;
}
Collect all logits matrices $\mathcal{S} := \{\mathbf{S}^{(i, j)}\}$\;
Select top-$M$ classifiers $g(\cdot)$ according to the lowest validation metric\;

\textbf{Step 3: Decorrelation}\;
Reinitialize model $f(\cdot)$\;
\For{epoch $e \in \{1, 2, \ldots, P\}$}{
    \For{each minibatch $B$ in $\mathcal{G}$}{
        Compute logits from top-$M$ logits matrices in $\mathcal{S}$\;
        Minimize Eqn. \ref{final_loss} to update $\rho(\cdot)$ and $h(\cdot)$\;
    }
}

\end{algorithm}

\section{Complexity Analysis}

\noindent\textbf{Time Complexity.} The time complexity of the \texttt{EQuAD} framework depends on the specific GNN encoder employed. In this work, we utilize GIN~\cite{xu2019powerful}, which is a 1-WL GNN. Consequently, the time complexity is $\mathcal{O}(CkmF)$, where $k$ is the number of GNN layers, $F$ is the feature dimensions and $m$ denotes the number of edges in a graph $G$. \texttt{EQuAD} incurs an additional constant $C>1$ due to the multiple-stage learning paradigm.

\noindent\textbf{Space Complexity.} EQuAD also incurs additional memory overhead as in the encoding step, it needs to store multiple matrices, and in the quantifying stage the matrices are transformed into logits matrices (which is much lower dimensional). In the implementation, we save the graph representation matrices (and corresponding labels) obtained from encoding stage in the disk, and load them one by one, followed by transforming them into logits matrices, hence the memory complexity is $\mathcal{O}(H|\mathcal{C}||\mathcal{B}|+|\mathcal{B}|m)$, here $|\mathcal{C}|$ is the number of class labels, $H$ is the number of the graph representation matrices selected from the quantifying step, and $|\mathcal{B}|$ denotes the batch size. As $H$, $|\mathcal{C}|$ and $|\mathcal{B}|$ are usually small integers, the memory cost is affordable.

We provide a empirical running time analysis in Table~\ref{app:running_time}. As illustrated, the running time of EQuAD is comparable to or less than that of most invariance learning methods, also exhibiting lower variance, since we do not adopt early stop and only need to run for 50 epochs for all datasets. Specifically, the first step accounts for around 65\% of the total time, while steps 2 and 3 together constitute the remaining 35\%. 

\begin{table}[]
\centering
\caption{Running time analysis in seconds for various baseline methods and \texttt{EQuAD} on EC50 datasets.}
\label{app:running_time}
\begin{tabular}{@{}cccc@{}}
\toprule
\textbf{Method} & \textbf{EC50-Sca} & \textbf{EC50-Assay} & \textbf{EC50-Size} \\ \midrule
ERM             & 113.97±1.56       & 169.83±21.76        & 224.54±63.38       \\ \midrule
IRM             & 1102.51±17.31     & 1719.48±215.50      & 1035.22±30.71      \\
V-Rex           & 932.81±1.16       & 1498±137.18         & 886.07±0.27        \\
CIGA            & 2179.94±540.88    & 2676.04±897.95      & 1822.99±551.09     \\
GREA            & 1812.90±273.94    & 2461.01±876.19      & 1661.67±375.18     \\
GALA            & 1699.81±235.46    & 2176.58±638.76      & 1365.27±240.32     \\
GSAT            & 1140.90±285.10    & 1609.15±322.02      & 1135.12±326.98     \\ \midrule
EQuAD           & 878.34±12.31      & 1240.88±36.32       & 1326.61±21.79      \\ \bottomrule
\end{tabular}
\end{table}

\section{Experimental Details}

\subsection{Datasets}
\label{app:datasets}

We provide a more detailed introduction of the datasets adopted in the experiments as follows.

\textbf{SPMotif datasets.} Following \cite{wu2022discovering}, we generate a 3-class synthetic datasets based on BAMotif \cite{ying2019gnnexplainer}. In these datasets, each graph comprises a combination of invariant and spurious subgraphs, denoted by $I$ and $S$. The spurious subgraphs include three structures (Tree, Ladder, and Wheel), while the invariant subgraphs consist of Cycle, House, and Crane. The task for a model is to determine which one of the three motifs (Cycle, House, and Crane) is present in a graph. A controllable distribution shift can be achieved via a pre-defined parameter $b$. This parameter manipulates the spurious correlation between the spurious subgraph $G_s$ and the ground-truth label $Y$, which depends solely on the invariant subgraph $G_c$. Specifically, given the predefined bias $b$, the probability of a specific motif (e.g., House) and a specific base graph (Tree) will co-occur is $b$ while for the others is $(1-b) / 2$ (e.g., House-Ladder, House-Wheel). When $b=\frac{1}{3}$, the invariant subgraph is equally correlated to the three spurious subgraphs in the dataset. In SPMotif datasets, $S$ is directly influenced by $C$, and $C$ is causally related with $Y$, thus satisfies our data generating assumption as FIIF.

\textbf{Two-piece graph datasets.} To validate the effectiveness of EQuAD under the PIIF data generating process, we adopt the two-piece graph datasets\cite{chen2023does}. These datasets employ parameters $\alpha^e$ and $\beta^e$ to control the correlations between $C$ and $Y$, and between $S$ and $Y$ respectively in different environments. A formal definition of the two-piece graph is presented as follows:

\begin{definition}[Two-piece graphs]
Each environment $E=e$ is defined with two parameters, $\alpha_e, \beta_e \in$ $[0,1]$, and the dataset $\left(G^e, Y^e\right) \in \mathcal{D}_e$ is generated as follows:
\begin{enumerate}
    \item[(a)] Sample $Y^e \in\{-1,1\}$ uniformly;
    \item[(b)] Generate $G_c$ and $G_s$ via: $G_c:=f_{\text{gen}}^{G_c}\left(Y^e \cdot \operatorname{Rad}\left(\alpha_e\right)\right), G_s:=f_{\text{gen}}^{G_s}\left(Y^e \cdot \operatorname{Rad}\left(\beta_e\right)\right)$, where $f_{\text{gen}}^{G_c}, f_{\text{gen}}^{G_s}$ map the input $\{-1,1\}$ to a corresponding graph selected from a given set, and $\operatorname{Rad}(\alpha)$ is a random variable taking value -1 with probability $\alpha$ and +1 with $1-\alpha$;
    \item[(c)] Synthesize $G^e$ by randomly assembling $G_c$ and $G_s: G^e:=f_{g e n}^G\left(G_c, G_s\right)$.
\end{enumerate}
\end{definition}

Specifically, we adopt BAMotif \cite{ying2019gnnexplainer} to generate 3 variants of 3-class two-piece graph datasets, with different correlation degrees of $H(S|Y)$ parametrized by $(\alpha^e,\beta^e)$, where we can examine both scenarios where $H(C|Y)>H(S|Y)$ and $H(C|Y)<H(S|Y)$, to verify the effectiveness of the baseline methods and our approach.

\textbf{DrugOOD.} The DrugOOD benchmark\cite{ji2022drugood} is specifically designed for OOD challenges in AI-aided drug discovery. It comprises a diverse collection of datasets focusing on the prediction of drug-target interactions. Each dataset within DrugOOD is derived from real-world bioactivity data, encompassing a range of drug compounds and their target proteins. The datasets are categorized based on two bioactivity measures: IC50 and EC50, representing half-maximal inhibitory concentration and half-maximal effective concentration, respectively. DrugOOD datasets are split into subsets using three distinct criteria: Assay, Scaffold, and Size. The Assay-based split groups data according to biological assays, reflecting variations in experimental conditions. The Scaffold-based split focuses on the chemical structure, categorizing compounds by their core molecular scaffolding. The Size-based split, on the other hand, divides the data based on the size of the molecular compounds, offering insights into size-dependent drug properties. With the two measurements (IC50 and EC50), and three environment-splitting strategies (assay, scaffold and size), we obtain 6 datasets, and each dataset contains a binary classification task for drug target binding affinity prediction.
    
\textbf{Open Graph Benchmark(OGB).}  We also consider two molecule datasets, MOLBACE and MOLBBBP, from Open Graph Benchmark \cite{hu2021open}. We use the scaffold splitting procedure as OGB adopted, where different molecules are structurally separated into different subsets, which provides a more realistic estimate of model performance in experiments \cite{C7SC02664A}. 

\subsection{Experiment Settings}

\textbf{Encoding.} For encoding stage, we adopt MVGRL\cite{hassani2020contrastive} to obtain the latent representations. MVGRL \cite{hassani2020contrastive} utilize global-local MI maximization as learning objective, In addition, it utilizes Personalized PageRank\cite{Page1999ThePC} and heat kernel\cite{Kondor2002DiffusionKO} as data augmentation methods to generate correlated structural views for contrastive learning. Intuitively, the structural augmentation corrupts the invariant substructure by randomly adding edges to the neighboring nodes, thus further facilitates the learning of spurious patterns under OOD settings. Across all the datasets, we set  teleport probability $\alpha=0.2$, and  diffusion time $t=5$. To obtain a collection of latent representation $\mathcal{H}$, we set pre-defined training epochs $\mathcal{P}=\left\{50,100,150\right\}$, number of layers in GNN encoder $L=\left\{2,3,5\right\}$, hidden dimensions $H=\left\{32,64\right\}$, leading to 18 embedding matrices for each dataset. 

\textbf{Quantifying.} To quantify the latent representations in $\mathcal{H}$ and obtain $\mathcal{S}$, we adopt linear svm\cite{hearst1998support} for its computational efficiency, followed by probability calibration \cite{platt1999probabilistic,zadrozny2001obtaining} to get logits values. For all datasets, the regularization parameters $C$ of linear svms are searched over $\left\{10,1000\right\}$. We set cross validation to be $5$ for probability calibration classifier.

\textbf{Decorrelation.} For the decorrelation step, we utilize Eqn. \ref{final_loss} and search over the following hyperparameters:

\begin{itemize}
    \item $\gamma:\left\{0.7,0.5,0.3,0.1,0.05\right\}$
    \item  $\tau:\left\{1.0,0.5,0.25,0.1,0.01\right\}$
    \item  $\lambda:\left\{1e-3,1e-2,1e-1\right\}$
\end{itemize}

As discussed in Section~\ref{app:discuss_equad_framework}, when the dataset exhibits high bias, employing Eqn. \ref{inv_eq2} can lead to gradient conflicts between $\mathcal{L}_{GT}$ and $\mathcal{L}_{Inv}$, posing challenges in optimization. A potential solution is to adopt full-batch training; however, this approach is not memory efficient and is impractical for large-scale datasets. To address this issue, we propose an alternative loss that encourages decorrelation between $\widehat{\mathbf{h}}_c$ and $\mathbf{h}_s$ which is efficient and effective when the bias is high, as outlined in Eqn.~\ref{erm_eq_alter}. For two-piece graph datasets, we adopt Eqn.~\ref{erm_eq_alter} for model training, which simplifies the optimization process, and also enhances performance. The hyperparameter $q$ is searched over: $\left\{0.9,0.7,0.5,0.3,0.1,0.01\right\}$. 

\textbf{Backbone encoder.} We adpot GIN\cite{xu2019powerful} as backbone encoder for all the datasets, and the number of layers $L$ are searched over $\left\{3,5\right\}$ across all the datasets for EQuAD and all other baseline methods. For synthetic datasets, the hidden dimension is uniformly set to 32 across all methods, while for the DrugOOD and OGBG datasets, it is set to 128. 

\textbf{Other experiment setting details.} For all the methods and datasets, the experiments are ran for three times, with random seeds $\small\{1,2,3\small\}$. The batch size is 32 for all experiments, with a learning rate of $1e-3$ is applied across all experiments. We adopt Adam optimizer~\cite{kingma2017adam} for model training. For \texttt{EQuAD}, the experiments for SPMotif and DrugOOD datasets are ran for 50 epochs, and for OGBG datasets, the experiments are ran for 100 epochs. We do not use early stopping and learning rate scheduler for all the experiments. 

\subsection{Implementation Details}

We implement our method and all baseline methods using PyTorch\cite{paszke2019pytorch} and PyTorch Geometric\cite{fey2019fast}. To generate latent embeddings with MVGRL\cite{hassani2020contrastive}, we adopt PyGCL\cite{Zhu:2021tu} package for the implementation. We utilize linearSVC and CalibratedClassifierCV in scikit-learn\cite{scikit-learn} for the implementation of linear SVM and probability calibration classifier respectively. We ran all our experiments on Linux Servers with GeForce RTX 4090 with CUDA 11.8.

\begin{figure}[h]
\centering
\vspace{-1pt}
\scalebox{0.95}{
\includegraphics[width=0.48\textwidth]{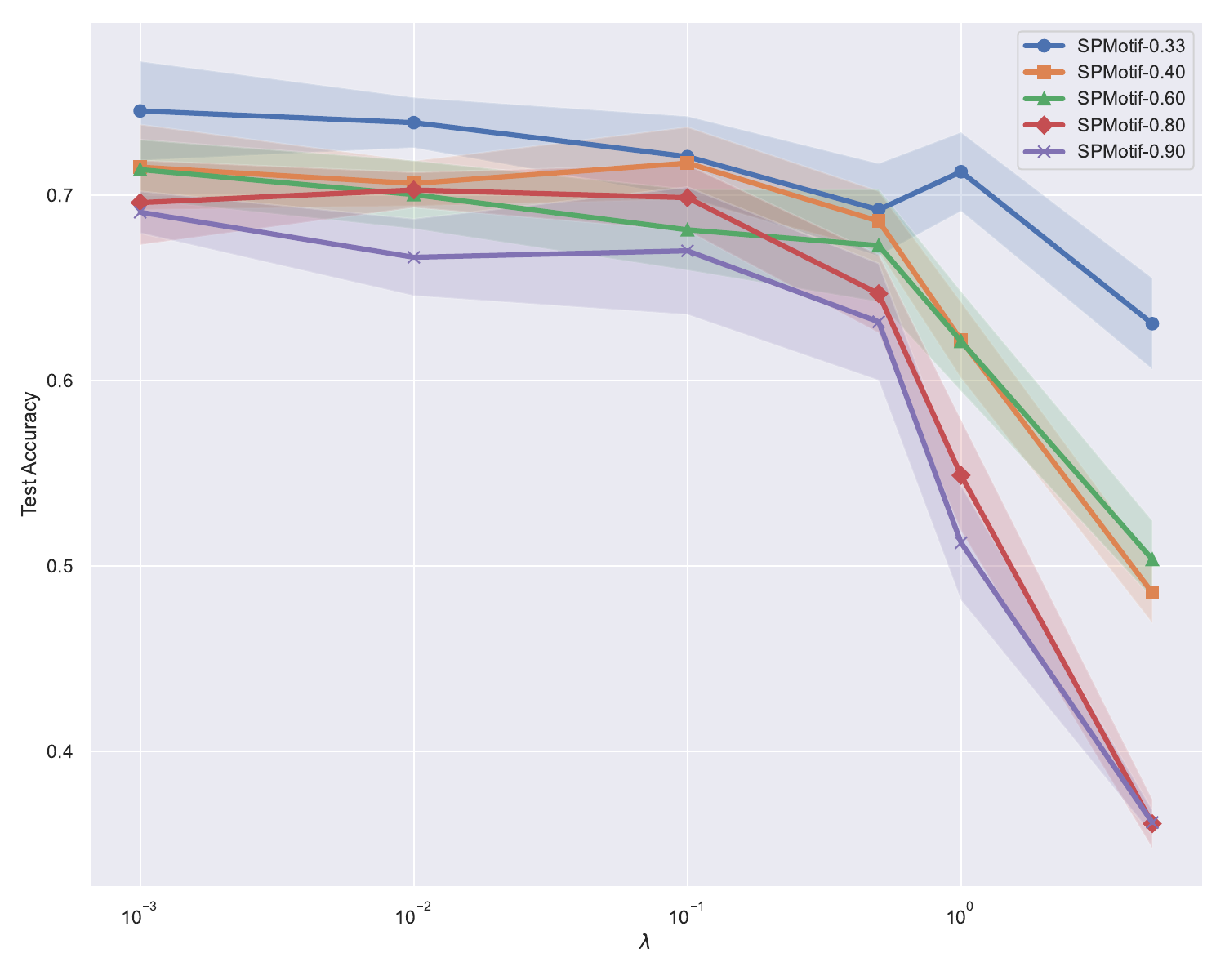}}
\vspace{-10pt}
\caption{Ablation study on $\lambda$. Notably, when $\lambda$ is greater than 1, the performance degrades dramatically, which is consistent with our theoretical results in Seciton~\ref{proof:theo_inv_eq}.} 
\label{fig:sl_abstudy}
\end{figure}

\begin{figure*}[t]
\centering
\vspace{-1pt}
\scalebox{0.95}{
\includegraphics[width=0.95\textwidth]{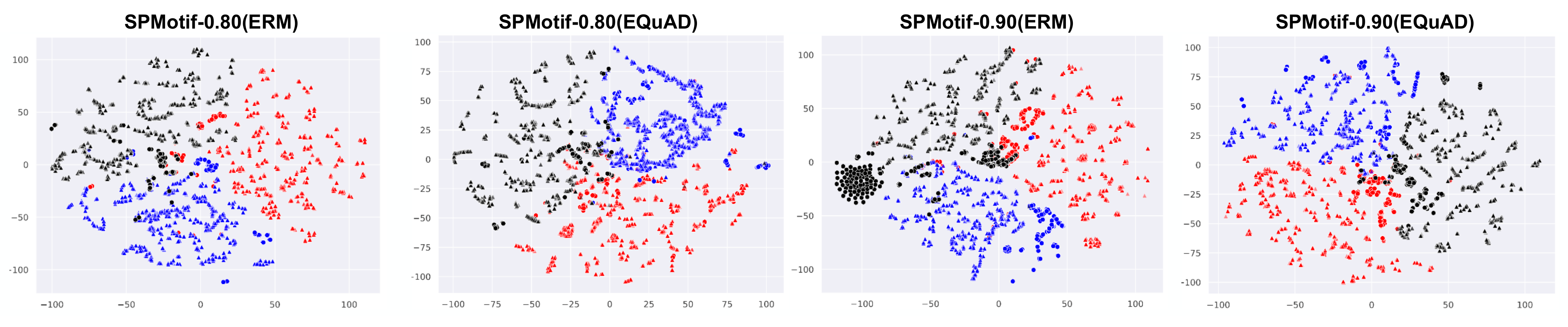}}
\vspace{-8pt}
\caption{Latent representations of data samples in 2d space for SPMotif datasets, where samples from different labels are represented by different colors,  within the same class label, we use \textit{cycles} and \textit{triangles} to represent data samples with high and low spurious correlations to $Y$. Representations obtained from EQuAD exhibit higher overlapping for the \textit{cycles} and \textit{triangles} points within the same classes, thus leading to better graph invariant features.} 
\label{fig:tsne_spm89}
\end{figure*}

\begin{figure}[t]
\centering
\vspace{-1pt}
\scalebox{0.98}{
\includegraphics[width=0.95\textwidth]{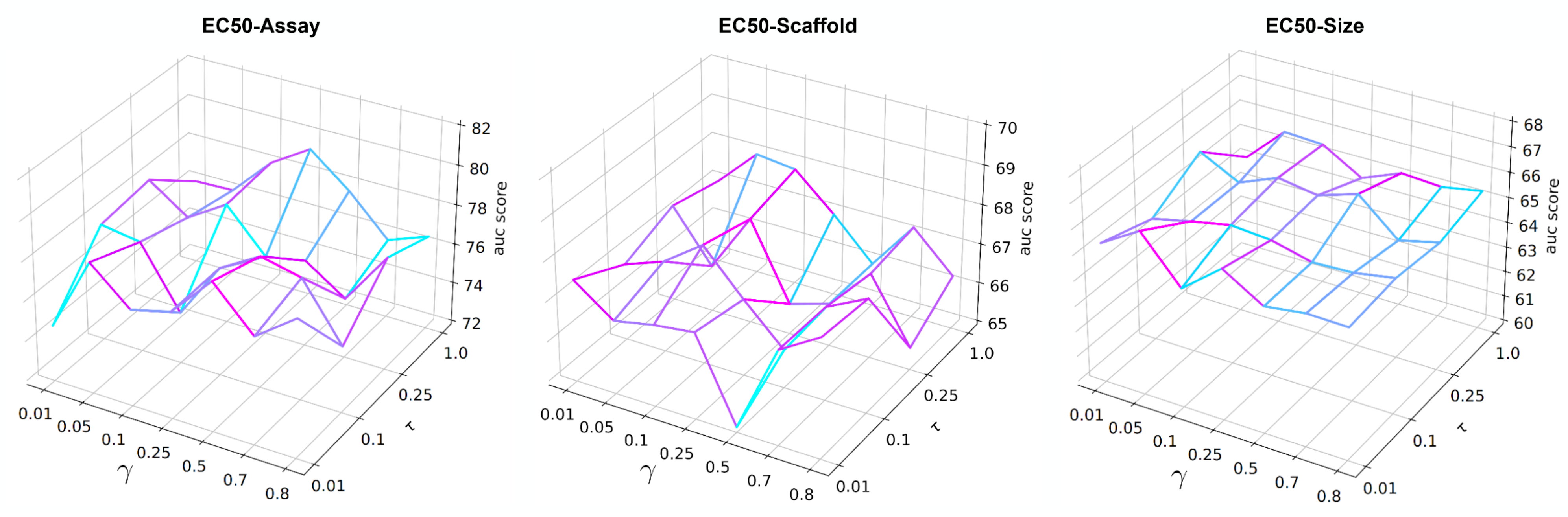}}
\vspace{-10pt}
\caption{Sensivitity on hyperparameters of $\gamma$ and $\tau$.} 
\label{fig:ec50_hyper_3d}
\end{figure}

\begin{figure}[t]
\centering
\vspace{-1pt}
\scalebox{0.98}{
\includegraphics[width=0.95\textwidth]{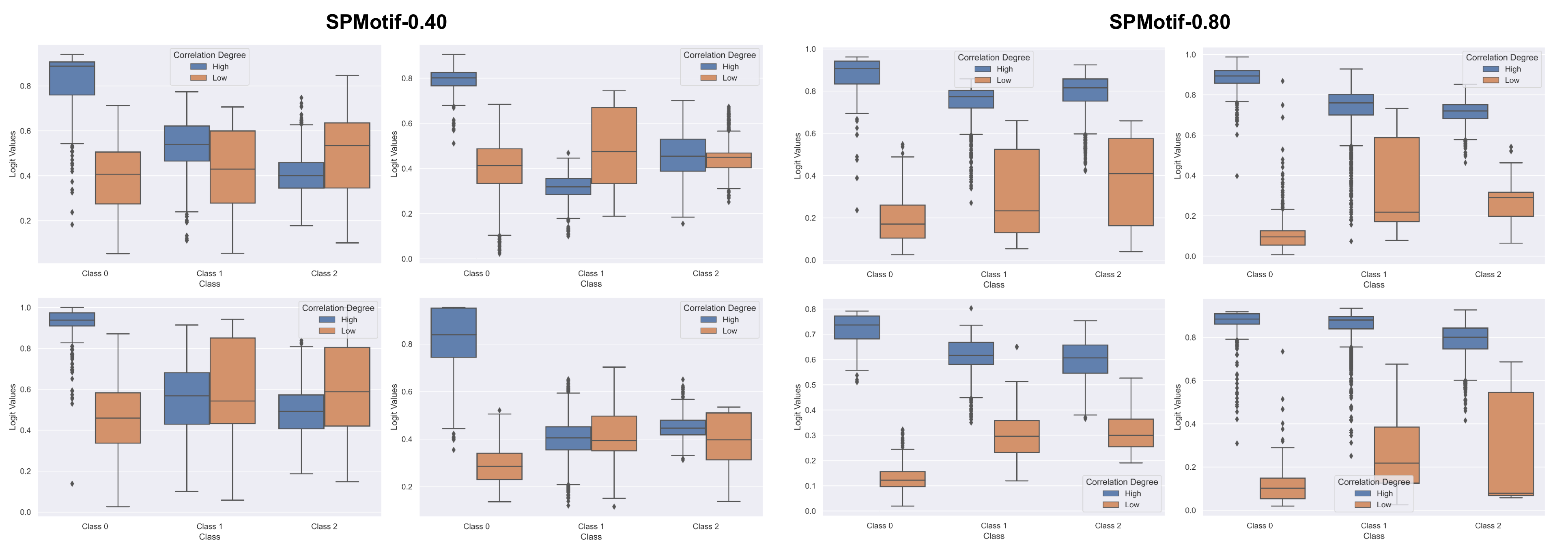}}
\vspace{-10pt}
\caption{Logits distributions for all three classes of SPMotif-0.40 and SPMotif-0.80.} 
\label{fig:boxplots}
\end{figure}

\begin{figure}[t]
    \centering
    \includegraphics[width=\linewidth]{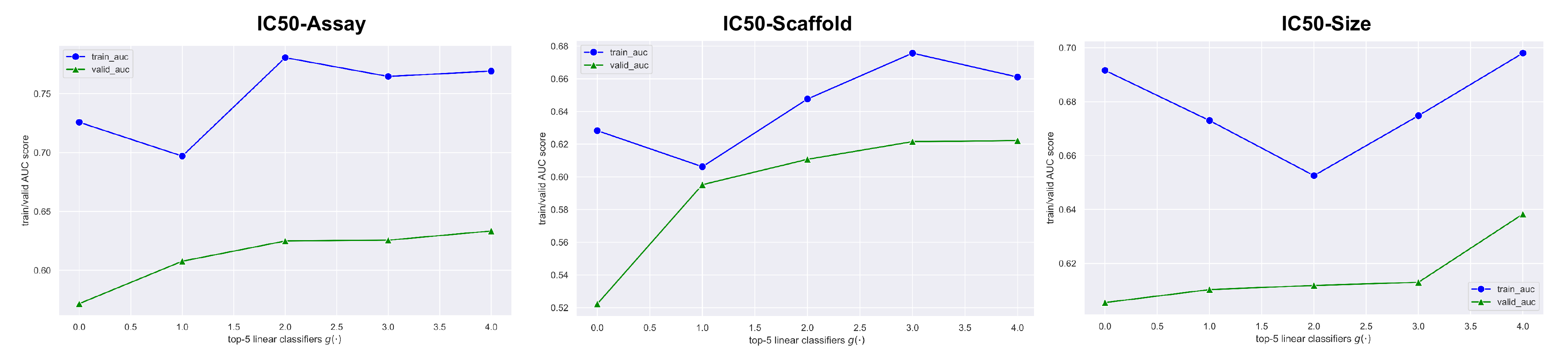}
    \caption{Train and validation AUC score for top-5 linear classifiers $g(\cdot)$ in step 2 for IC50 datasets.}
    \label{fig:ic50_svm}

    \vspace{0.5cm} % Adds some space between the two figures

    \includegraphics[width=\linewidth]{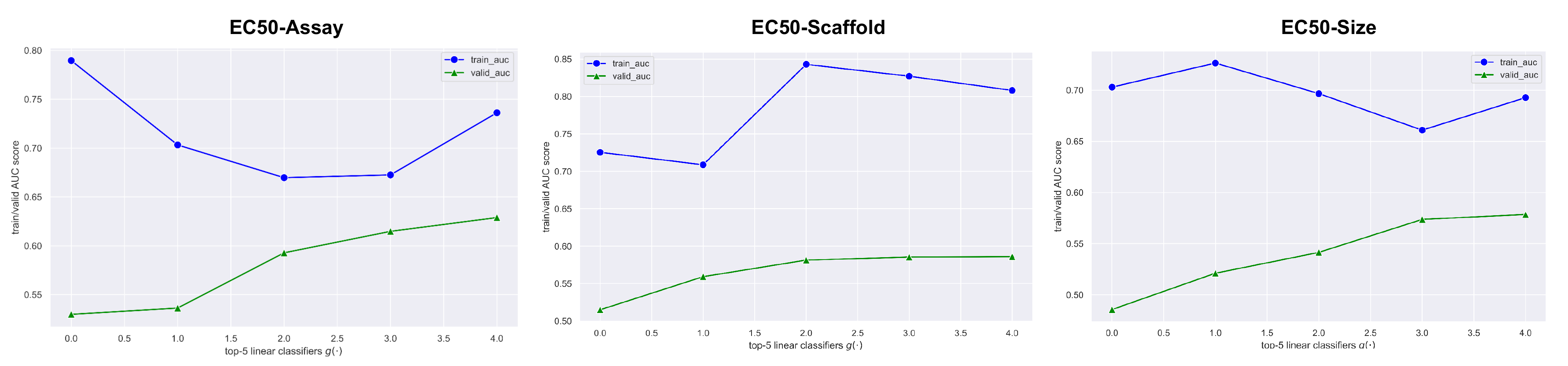} 
    \caption{Train and validation AUC score for top-5 linear classifiers $g(\cdot)$ in step 2 for EC50 datasets.}
    \label{fig:ec50_svm}
\end{figure}

\begin{figure}[t]
\centering
\vspace{-10pt}
\scalebox{0.9}{
\includegraphics[width=0.7\textwidth]{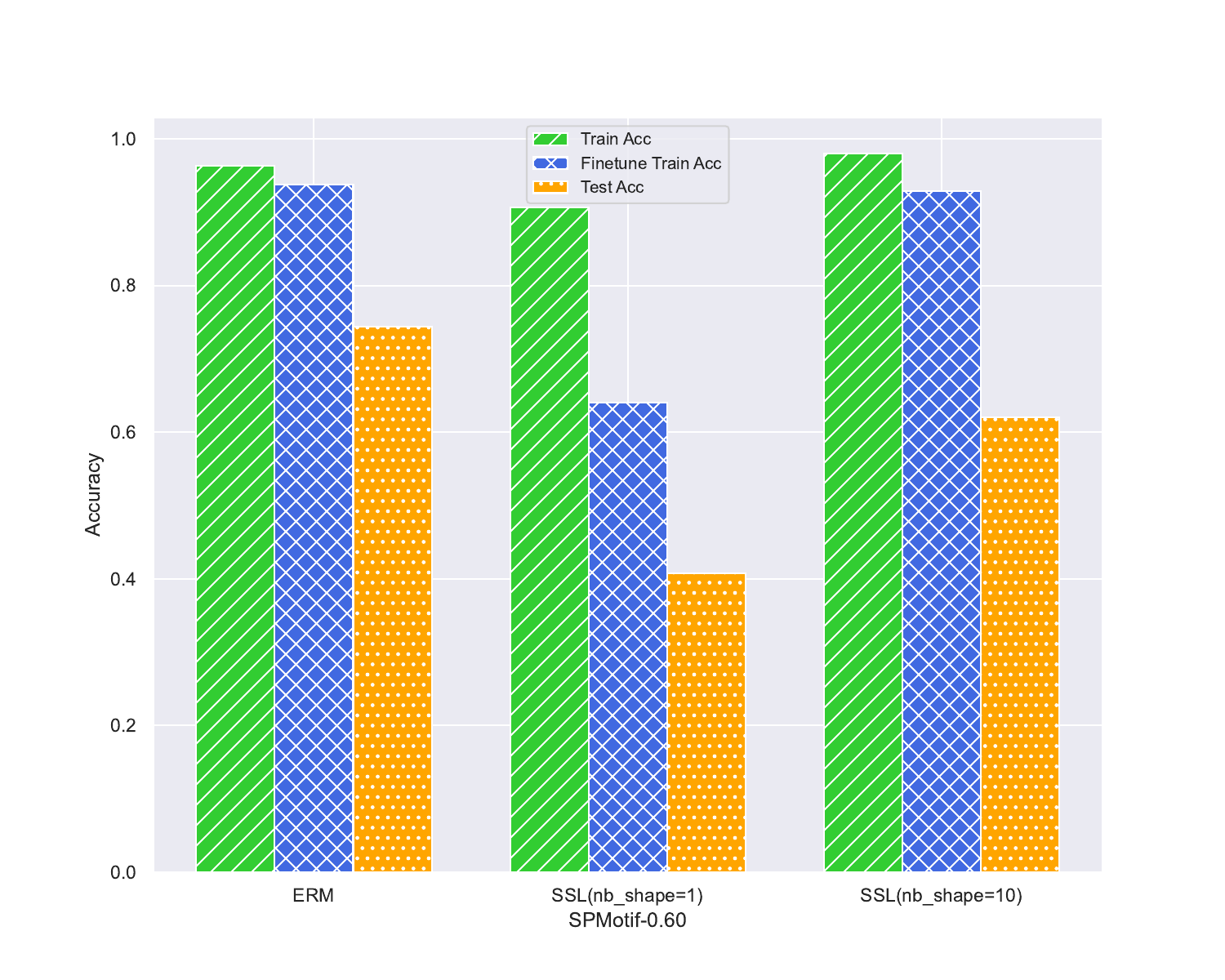}}
\vspace{-10pt}
\caption{Representation quality in terms of how much invariant features are included. When $|G_c|>|G_s|$, infomax-based SSL effectively include causal invariant features, which validates assumptions of theorem~\ref{thm:spu_infomax} empirically from another perspective.} 
\label{fig:gc_bigger}
\end{figure}

\subsection{More Experiment Results}
\label{app:exp}

\paragraph{More Ablation Studies} We provide more ablation studies to study the impact of different SSL algorithms in the \textit{Encoding} stage, and the impact of different decorrelation loss objectives. In our study, we adopt the infomax principle to learn spurious features, here we provide experiment results with GraphCL~\cite{graphcl} as an alternative for learning spurious features in the \textit{Encoding} stage, as illustrated in Table~\ref{app:abstudy_ssl}. As shown in Table \ref{app:abstudy_ssl}, the representations derived from GraphCL, compared to those obtained using Infomax, exhibit a notable decline in performance on both synthetic and real-world datasets. This phenomenon could likely be attributed to the scenario when $|G_c| < |G_s|$, randomly dropping nodes or edges are more likely to perturb the spurious subgraph $G_s$, thereby enhancing the invariant feature learning for $G_c$. This underscores the importance of accurately identifying spurious features in the \textit{encoding} step. We also conduct experiments to study the decorrelation loss objectives (i.e., Eqn.~\ref{final_loss} and Eqn.~\ref{erm_eq_alter}). The experiment results are illustrated in Table~\ref{app:abstudy_loss}. As discussed in Appendix~\ref{app:discuss_equad_framework}, Eqn.~\ref{final_loss} poses optimization challenges due to gradient conflicts for Two-piece graph datasets, while using Eqn.~\ref{erm_eq_alter} brings significant performance improvement, which also demonstrates the flexibility of EQuAD.

\begin{table}[]
\centering
\caption{Ablation study on different SSL algorithms}
\label{app:abstudy_ssl}
\begin{tabular}{@{}cllllll@{}}
\toprule
\textbf{Method} &
  \multicolumn{1}{c}{\textbf{SPMotif-0.40}} &
  \multicolumn{1}{c}{\textbf{SPMotif-0.60}} &
  \multicolumn{1}{c}{\textbf{SPMotif-0.90}} &
  \multicolumn{1}{c}{\textbf{EC50-Assay}} &
  \multicolumn{1}{c}{\textbf{EC50-Sca}} &
  \multicolumn{1}{c}{\textbf{EC50-Size}} \\ \midrule
ERM             & 62.19±3.26          & 55.24±2.43 & 49.41±3.78 & 67.39±2.90 & 64.98±1.29 & 65.10±0.38 \\
EQuAD w/GraphCL & 72.78±0.81          & 70.66±2.67 & 67.79±1.29 & 75.66±1.27 & 66.24±0.61 & 64.13±0.65 \\
EQuAD w/Infomax & \textbf{73.13±1.56} & \textbf{71.93±1.94} & \textbf{69.47±2.06} & \textbf{79.36±0.73} & \textbf{68.12±0.48} & \textbf{66.37±0.64} \\ \bottomrule
\end{tabular}
\end{table}

% Please add the following required packages to your document preamble:
% \usepackage{booktabs}
\begin{table}[]
\centering
\caption{Ablation study on different decorrelation loss objectives}
\label{app:abstudy_loss}
\begin{tabular}{@{}cllllll@{}}
\toprule
\textbf{Method} &
  \multicolumn{1}{c}{\textbf{(0.80,0.70)}} &
  \multicolumn{1}{c}{\textbf{(0.80,0.90)}} &
  \multicolumn{1}{c}{\textbf{(0.70,0.90)}} &
  \multicolumn{1}{c}{\textbf{EC50-Assay}} &
  \multicolumn{1}{c}{\textbf{EC50-Sca}} &
  \multicolumn{1}{c}{\textbf{EC50-Size}} \\ \midrule
EQuAD w/Eqn.~\ref{erm_eq_alter} &
  82.76±0.71 &
  \textbf{75.81±0.51} &
  \textbf{71.95±1.14} &
  79.14±0.51 &
  67.36±0.43 &
  64.48±1.14 \\
EQuAD w/Eqn.~\ref{final_loss} &
  \textbf{82.03±0.38} &
  54.36±3.03 &
  44.69±1.06 &
  \textbf{79.36±0.73} &
  \textbf{68.12±0.48} &
  \textbf{66.37±0.64} \\ \bottomrule
\end{tabular}
\end{table}

\paragraph{Sensitivity Analysis} We first investigate the impact of various hyperparameters. First, we examine $\lambda$ in Eqn.~\ref{final_loss}, which controls the regularization strength of $\mathcal{L}_{Inv}$. As demonstrated in Figure \ref{fig:sl_abstudy}, across 5 different datasets, the optimal $\lambda$ generally remains below $0.1$. When $\lambda \geq 1$, there is a substantial decline in model performance, as now the empirical risk focus on $\mathcal{L}_{Inv}$ which maximizes the conditional entropy $H\left(\mathbf{h}_s \mid \widehat{\mathbf{h}}_c\right)$, hence degrade to random guessing ultimately. The experiment results are also consistent with our theoretical results in Sec.~\ref{proof:theo_inv_eq}, i.e., $\lambda$ should be less than $1.0$ to guide the model focusing on learning invariant features. 

Next, we examine $\gamma$ and $\tau$, the hyperparameters that control the sample-specific reweighting and model-specific reweighting scheme respectively. The experiments are performed on three EC50 datasets. As illustrated in Figure \ref{fig:ec50_hyper_3d}, under a range of hyperparameter settings, EQuAD consistently matches or outperform the performance of state-of-the-art methods in all three EC50 datasets. Moreover, the careful tuning of $\gamma$ and $\tau$ are shown to be effective empirically. For instance, on the EC50-Scaffold dataset, the best-performing hyperparameters yield a performance increase of $3.67\%$ compared to the least effective settings.

\paragraph{More Visualizations on Latent Representations in 2d Space}  We provide more visualization results of latent representations in 2d space using t-SNE~\cite{JMLR:v9:vandermaaten08a}. Figure~\ref{fig:tsne_spm89} illustrates the latent representation in 2d space for the training set, where samples from different labels (represented by various colored points) trained with both ERM and EQuAD exhibit a clear decision boundary, achieving training accuracy above 90\% in both cases. On the other hand, within the same classes, we use \textit{cycles} and \textit{triangles} to represent data samples with high and low spurious correlations to $Y$ respectively. The visualizations indicate that the representations obtained from ERM exhibit a notable separation between the two groups within the latent space, implying minimal overlap. In contrast, the representations from EQuAD show greater overlap, or they are more closely positioned. This observation implies that the GNN model $f(\cdot)$ trained from scratch learns high-quality invariant features.

\paragraph{More Visualizations on Prediction Logits from $g(\cdot)$} We present the logits distributions for all three classes in SPMotif-0.40 and SPMotif-0.80, as derived from the top-4 classifiers $g(\cdot)$ from Step 2. Figure~\ref{fig:boxplots} illustrates that, under varying degrees of spurious correlation, both high and low correlation samples exhibit discernible differences. Notably, at a higher bias level (e.g., $bias=0.8$), the distinction between these samples is more pronounced, with a certain symmetry around 0.5, providing favorable conditions for subsequent decorrelation. However, at $bias=0.4$ some classes show less distinct separation and lack symmetry. This observation underscores the need to implement sample-specific and model-specific reweighting strategies to determine the optimal prediction logits matrices.

\paragraph{Does infomax-based SSL representations capture spurious substructures in real-world datasets?} Our analysis, as illustrated in Figure \ref{fig:emb_quality}, demonstrates that infomax-based SSL embeddings predominantly capture spurious features in SPMotif datasets. This is a crucial prerequisite for successful decorrelation and invariance learning in subsequent steps. To further investigate the propensity of infomax-based SSL embeddings to learn spurious features in real-world datasets, we conducted experiments on IC50 and EC50 datasets. We evaluated the train/validation AUC scores for the top-5 classifiers $g(\cdot)$ in step 2, specifically selecting those with the lowest validation AUC scores. Ideally, we would expect a high train AUC score with a low validation AUC score.

Observations from Figures~\ref{fig:ic50_svm} and \ref{fig:ec50_svm} reveal that, across datasets with different distribution shifts, there is typically a significant gap between train and validation AUC scores for most $g(\cdot)$, indicating a primary focus on learning spurious features. However, there are exceptions, such as in the IC50-Scaffold dataset, where a smaller gap due to lower train AUC scores suggests that the representation may not predominantly contain spurious features. This finding leads us to consider implementing a model-specific reweighting scheme to address such scenarios. 
Finally, the observed discrepancy between the training and validation AUC scores also supports the assumption that $|G_c|<|G_s|$ is valid in these datasets, which is crucial for the successful capture of spurious features by the infomax-based SSL approach.

\paragraph{Infomax principle learns invariant features when $|G_c|>|G_s|$} To further evaluate our theoretical presumption that infomax-based SSL can extract spurious features when $|G_c|<|G_s|$, we perform another empirical study by modifying the SPMotif dataset to let $|G_c|>|G_s|$, and examine whether infomax-based SSL learns causal features. Specifically, we modify SPMotif dataset to shrink the size of base spurious subgraph (from 40 nodes in average to 12) and increase the number of ground-truth subgraphs (from 1 to 10). By doing this, $|G_c|>|G_s|$ is satidfied, and we conduct a similar experiment as discussed in Section~\ref{app:rep_quality}. Figure~\ref{fig:gc_bigger} demonstrates both scenarios of $|G_c|<|G_s|$ (in the middle) and $|G_c|>|G_s|$ (in the right). When $|G_c|>|G_s|$, the validation accuracy (after feature reweighting) exceeds $90\%$, implying that SSL with infomax principle learns causal invariant features. We can also see that the test accuracy is also much higher compared to the case when $|G_c|<|G_s|$ ($62.1\%$ vs $40.3\%$), although still lower than ERM.

%%%%%%%%%%%%%%%%%%%%%%%%%%%%%%%%%%%%%%%%%%%%%%%%%%%%%%%%%%%%%%%%%%%%%%%%%%%%%%%
%%%%%%%%%%%%%%%%%%%%%%%%%%%%%%%%%%%%%%%%%%%%%%%%%%%%%%%%%%%%%%%%%%%%%%%%%%%%%%%

\end{document}